\documentclass{article}

 \usepackage[preprint]{neurips_2026}

\usepackage[utf8]{inputenc} %
\usepackage[T1]{fontenc}    %
\usepackage{url}            %
\usepackage{booktabs}       %
\usepackage{amsfonts}       %
\usepackage{nicefrac}       %
\usepackage{microtype}      %
\usepackage{xcolor}         %

\title{Back to Blackwell: Closing the Loop on Intransitivity in Multi-Objective Preference Fine-Tuning}

\author{%
  Jiahao Zhang\thanks{Equal contribution.} \\
  Carnegie Mellon University\\
  Pittsburgh, PA 15213 \\
  \texttt{jiahaozhang@cmu.edu} \\
  \And
  Lujing Zhang\footnotemark[1] \\
  Carnegie Mellon University\\
  Pittsburgh, PA 15213 \\
  \texttt{lujingz@andrew.cmu.edu}
  \And
  Keltin Grimes \\
  Carnegie Mellon University\\
  Pittsburgh, PA 15213 \\
  \texttt{kgrimes@andrew.cmu.edu}
  \AND
  Zhuohao Yu \\
  Carnegie Mellon University\\
  Pittsburgh, PA 15213 \\
  \texttt{zhuohaoy@andrew.cmu.edu}
  \And
  Gokul Swamy \\
  Carnegie Mellon University\\
  Pittsburgh, PA 15213 \\
  \texttt{gswamy@cmu.edu}
  \And
  Zhiwei Steven Wu \\
  Carnegie Mellon University\\
  Pittsburgh, PA 15213 \\
  \texttt{zstevenwu@cmu.edu}
}

\usepackage[T1]{fontenc}
\usepackage{algorithm,algpseudocode} 
\usepackage{tikz}
\usetikzlibrary{backgrounds}
\usepackage{enumitem}
\usetikzlibrary{matrix} 
\usepackage{xurl}
\usepackage{microtype}
\usepackage{graphicx}
\usepackage{subcaption}
\usepackage{booktabs} %

\usepackage{amsmath,amsfonts,bm}

\def\eqref#1{equation~\ref{#1}}

\def\1{\bm{1}}

\DeclareMathAlphabet{\mathsfit}{\encodingdefault}{\sfdefault}{m}{sl}
\SetMathAlphabet{\mathsfit}{bold}{\encodingdefault}{\sfdefault}{bx}{n}

\newcommand{\E}{\mathbb{E}}

\newcommand{\KL}{\mathbb{D}_{\mathrm{KL}}}

\newcommand{\Cs}{\mathcal{C}}

\newcommand{\Ps}{\mathcal{P}}

\newcommand{\Xs}{\mathcal{X}}
\newcommand{\Ys}{\mathcal{Y}}

\newcommand{\reff}{\mathsf{ref}}

\DeclareMathOperator*{\argmax}{arg\,max}
\DeclareMathOperator*{\argmin}{arg\,min}

\newcommand{\notshow}[1]{{}}
\newcommand{\AutoAdjust}[3]{{ \mathchoice{ \left #1 #2  \right #3}{#1 #2 #3}{#1 #2 #3}{#1 #2 #3} }}
\newcommand{\Xcomment}[1]{{}}

\newcommand{\InParentheses}[1]{\AutoAdjust{(}{#1}{)}}
\newcommand{\InBrackets}[1]{\AutoAdjust{[}{#1}{]}}
\newcommand{\InAngles}[1]{\AutoAdjust{\langle}{#1}{\rangle}}

\newcommand{\dist}{\operatorname{dist}}

\newcommand{\nogame}{\texttt{PROSPER-VB}}
\newcommand{\PROSPER}{\texttt{PROSPER}}
\newcommand{\fullcheck}{\texttt{PROSPER-JC}}

\usepackage[colorlinks=true,allcolors=perfblue]{hyperref} 
\usepackage[customcolors]{hf-tikz}
\definecolor{expert}{HTML}{008000}
\definecolor{error}{HTML}{f96565}
\definecolor{learner}{HTML}{F79646}
\definecolor{perfblue}{RGB}{64, 114, 175}

\usepackage{pifont}
\usepackage{float}

\usepackage{amsfonts, amsmath, amssymb, bm}       %
\usepackage{amsthm}
\usepackage{thmtools}

\usepackage{amsmath}
\usepackage{amssymb}
\usepackage{mathtools}
\usepackage{amsthm}

\usepackage[capitalize,noabbrev]{cleveref}
\crefname{appendix}{appendix}{appendices}
\Crefname{appendix}{Appendix}{Appendices}

\theoremstyle{plain}
\newtheorem{theorem}{Theorem}[section]

\newtheorem{lemma}[theorem]{Lemma}

\theoremstyle{definition}
\newtheorem{definition}[theorem]{Definition}
\newtheorem{assumption}[theorem]{Assumption}
\theoremstyle{remark}

\usepackage{thmtools}
\usepackage{thm-restate}

\usepackage[textsize=tiny]{todonotes}

\begin{document}
\maketitle
\begin{abstract}
A recurring challenge in preference fine-tuning (PFT) is handling \textit{intransitive} (i.e., cyclic) preferences. Intransitive preferences often stem from either \textit{(i)} inconsistent rankings along a single objective or \textit{(ii)} scalarizing multiple objectives into a single metric. Regardless of their source, the downstream implication of intransitive preferences is the same: there is no well-defined optimal policy, breaking a core assumption of the standard PFT pipeline. In response, we propose a novel, game-theoretic solution concept, the \textit{Maximum Entropy Blackwell Winner} (\textit{MaxEntBW}), that is well-defined under multi-objective intransitive preferences. To enable computing MaxEntBWs at scale, we derive $\texttt{PROSPER}$: a provably efficient PFT algorithm. Unlike prior self-play techniques, $\texttt{PROSPER}$ directly handles multiple objectives without requiring scalarization. We then apply $\texttt{PROSPER}$ to the problem of fine-tuning large language models (LLMs) from multi-objective LLM-as-a-Judge feedback (e.g., rubric-based judges), a setting where both sources of intransitivity arise. We find that $\texttt{PROSPER}$ outperforms all baselines considered across both instruction following and general chat benchmarks, releasing trained model checkpoints at the \href{https://huggingface.co/MisDrifter/Qwen2.5-7B-PROSPER}{7B} and \href{https://huggingface.co/MisDrifter/Qwen2.5-3B-PROSPER}{3B} parameter scales.
\end{abstract}

\section{Introduction}\label{sec:introduction}

Author Ted Chiang famously described large language models (LLMs) as a ``\textit{blurry JPEG of the web}'' \citep{chiang2023blurry}. One way of interpreting his words is that after being trained on vast swathes of internet text, the choices of an LLM ``judge'' \citep{zheng2023judging} are often inconsistent. Thus, when LLMs are used to rank a set of options  $\{A, B, C\}$, they often exhibit \textit{intransitive} (i.e., cyclic) preferences where they rank $A \succ B$ and $B \succ C$, but $A \not\succ C$ \citep{xu2025investigatingnontransitivityllmasajudge}.

By construction, there is no \textit{total ordering} over options under intransitive preferences, as \textit{every} option loses to another. Thus, the standard \textit{preference fine-tuning} (PFT) pipeline \citep{stiennon2020learning, ouyang2022training, song2024importance, swamy2026all}, in which one attempts to learn such a total ordering via training a reward model, is fundamentally unequipped to robustly handle the intransitive preferences we see when attempting to learn from noisy LLM judge feedback \citep{procaccia2006distortion, boutilier2012optimal, sorensen2024roadmap, golz2025distortion}.

In response, prior work has primarily explored two mitigatory strategies. First, to reduce the baseline inconsistency of LLM judge feedback, recent work has proposed to ask the LLM judge to explicitly evaluate candidate responses along multiple objectives, rather than making a single aggregate judgment \citep{viswanathan2025checklists, gunjal2025rubrics}. This echoes observations from psychology, where one of the most common root causes of intransitivity is \textit{scalarization} of multiple objectives into a single metric \citep{tversky1969intransitivity}.

Second, to tackle the remaining per-objective inconsistency, a variety of authors have proposed to frame PFT as zero-sum game solving \citep{munos2023nash,swamy2024minimaximalist, rosset2024direct}, building on ideas from social choice theory on how to robustly handle intransitive preferences \citep{kreweras1965aggregation, fishburn1984probabilistic}. Intuitively, these approaches attempt to find a policy that the LLM judge prefers as often as possible against the worst-case comparator -- a well-defined notion even under intransitive preferences.

Given the complementary nature of these two lines of work, one may naturally hope to combine them into a single, gestalt solution. However doing so is technically challenging for a rather subtle reason: when one splits an aggregate judgement into multiple objectives, the Minimax Theorem \citep{vonneumann1928} no longer holds, as famously observed by \citet{Blackwell1956Approachability}. The downstream implication of this fact is that standard game-solving machinery (e.g., in the proofs of \citet{swamy2024minimaximalist}) breaks down, making it unclear how to directly extend prior algorithms that build on said machinery to the multi-objective PFT setting.

Faced with this roadblock, we circle back and ask a first-principles question about the multi-objective PFT setting: \textit{what should it mean to do well when evaluated under multiple objectives, each of which may be globally inconsistent?}

Our answer is the \textbf{\textit{Maximum Entropy Blackwell Winner (MaxEntBW):}} \textit{a policy that is preferred to all other policies within its local neighborhood, regardless of the criteria it is judged under.} Remarkably, even though it is a strictly more general solution concept than those explored in prior work, the underlying structure of MaxEntBWs means one can efficiently compute them without requiring the Minimax Theorem to hold.
Although MaxEntBWs are defined via a game between two LLM players, their structure permits an efficient reduction to \textit{\textbf{a single-player optimization}} problem: only the learner’s strategy must be maintained and updated, and the update can be implemented via square-loss regression. Thus, we can capitalize on the more accurate feedback we get from LLM judges by asking them to explicitly evaluate responses along multiple objectives without needing to complicate the policy optimization pipeline.
We call our algorithm \texttt{PROSPER}, for \textbf{PR}eference-based \textbf{O}ptimization with a \textbf{S}ingle \textbf{P}layer over \textbf{E}ntire \textbf{R}ubrics. Explicitly, our contributions are threefold:

\begin{figure}[t]
    \centering
    \begin{minipage}[c]{0.35\linewidth}
    \centering
    \includegraphics[width=\linewidth]{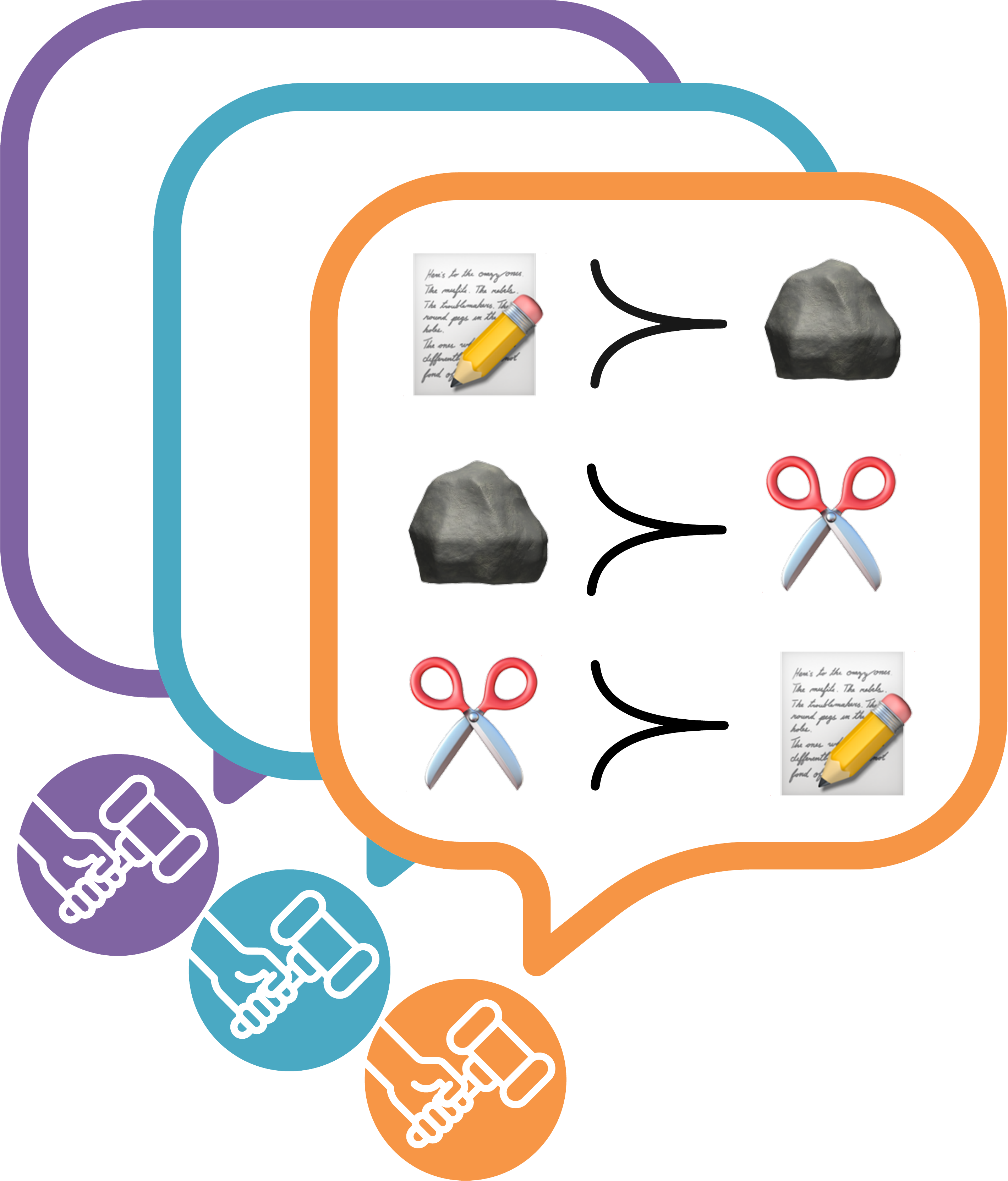}
    \end{minipage}
    \hfill
    \begin{minipage}[c]{0.61\linewidth}
    \caption{We study the problem of learning from multiple preferences over different objectives or criteria, each of which might be \textit{intransitive} (i.e., inconsistent). Such preferences are common when learning from large language model (LLM) ``judges'' \citep{zheng2023judging} that evaluate a response along multiple objectives. We propose a novel solution concept for multi-object preference fine-tuning (PFT), the \textit{Maximum Entropy Blackwell Winner (MaxEntBW)}, that remains well-defined under multiple intransitive preferences. We then derive \texttt{PROSPER}: a regression-based algorithm for computing MaxEntBWs, before using it to fine-tune LLMs from multi-objective LLM judge feedback on multiple problems.}
    \label{fig:ffig}
    \end{minipage}
\end{figure}

\noindent \textbf{1. We propose the Maximum Entropy Blackwell Winner (MaxEntBW) as a novel solution concept for multi-objective preference fine-tuning.} In contrast to classical solution concepts like the Minimax Winner that are only catered to single objectives \citep{kreweras1965aggregation, fishburn1984probabilistic, dudik2015contextual}, MaxEntBWs are well-defined under multiple objectives. MaxEntBWs also generalize the Blackwell Winners proposed by \citet{bhatia2020preference}.

\noindent \textbf{2. We derive \texttt{PROSPER}, a provably efficient, regression-based algorithm for computing MaxEntBWs.} We leverage the underlying structure of MaxEntBWs to derive a simple and scalable algorithm for computing them. Doing so requires fusing insights from \citet{ziebart2008maxent} and \citet{gao2024rebel} to first eliminate the inner-loop optimization, before reducing the outer-loop optimization down to simple regression. We prove rigorous performance and efficiency guarantees for \texttt{PROSPER}.

\noindent \textbf{3. We show that multi-objective LLM judge feedback is substantially intransitive in practice, and that \texttt{PROSPER} leads to strong downstream policies in this setting.} We first empirically verify that splitting up objectives reduces but does not eliminate intransitivity. We then apply \texttt{PROSPER} to instruction following and general chat and find that it outperforms all baselines considered.

\section{Related Work}
\raggedbottom
\noindent \textbf{Multi-Objective Preference Fine-Tuning.} 
A variety of approaches to multi-objective PFT have been explored in the literature, including minimax game formulations \citep{chakraborty2024maxmin}, distance-based optimization \citep{wang2024arithmetic,xiong2025projection}, policy aggregation \citep{zhou2024orchestrating,a2024policy,heymann2025adaptive}, distributional reward modeling \citep{siththaranjan2023distributional,halpern2025pairwise}, explicit multi-objective reward modeling \citep{rame2023rewarded,wang2024interpretable}, multi-objective reinforcement learning \citep{zhong2024panacea,wang2024conditional}, and multi-objective direct preference optimization \citep{chidambaram2025direct,zhou2024beyond,shirali2025direct,ramesh2024group}. However, all of these approaches implicitly assume that along each objective, preferences are transitive and there is a well-defined optimal policy, making it difficult to directly apply them to the more general setting we study in our work.

\noindent \textbf{Intransitive Preferences in Preference Fine-Tuning.} Drawing on ideas from social choice theory, several authors have pointed out that care is required when learning from intransitive, \textit{aggregate} preferences to avoid distorting the preferences of the underlying population of raters \citep{conitzer2024social,golz2025distortion,maura2025jackpot,liu2025statistical,xiao2025theoretical}. However, even when preference feedback is sourced from an LLM rather than a population of raters, intransitivity still frequently arises \citep{xu2025investigatingnontransitivityllmasajudge}, and can therefore stymie policy training. To deal with the fact that there is no well-defined optimal policy under intransitive preferences, a variety of authors have proposed to reframe PFT as a zero-sum game, in which one attempts to find the policy that does the best against its worst-case comparator \citep{munos2023nash, swamy2024minimaximalist, rosset2024direct, sutawika2026gainedtranslationprivilegedpairwise}. These ideas build on a rich history of similar techniques in the social choice theory \citep{kreweras1965aggregation, fishburn1984probabilistic} and bandit \citep{yue2012k, dudik2015contextual} literature. \citet{wu2025multiplayer} extend the two-player Nash-equilibrium formulation to the multi-player setting. While promising, all of these approaches assume the learner will be evaluated along a single objective. In contrast, we focus on the multi-objective setting, requiring nontrivial algorithmic innovation.

More closely to our work, several authors have explored handling multiple, potentially intransitive preferences. \citet{bhatia2020preference} pioneered the theoretical study of multi-dimensional intransitive preferences by proposing the Blackwell Winner, but stopped short of deriving a scalable algorithm capable of training modern LLMs as we do. One of the key technical innovations of the MaxEntBW is the use of entropy regularization to avoid the adversarial training \citet{bhatia2020preference}'s solution concept implicitly requires. \citet{agnihotri2025multi} take a constrained optimization perspective on multi-objective PFT, but essentially sidestep intransitivity by only optimizing the preference against a fixed reference policy, reducing the problem to standard multi-objective RL. In contrast, we directly confront the challenges of multiple intransitive preferences. Perhaps most similarly to our work, \citet{gupta2025mitigating} consider optimizing against multiple judges to deal with poor out-of-distribution judge generalization, deriving a scalable algorithm via the use of a variational relaxation. We implement a similar algorithm as one of our baselines and find that it performs worse than our proposed algorithm, \texttt{PROSPER}.

\noindent \textbf{RL from AI Feedback.} Our work falls within a growing body of research that leverages AI-generated feedback to guide reinforcement learning. In particular, we elicit AI feedback from judge models that are fed collections of rubrics \citep{cui2023ultrafeedback,gunjal2025rubrics,viswanathan2025checklists}, similar to the ``constitutional AI'' paradigm used by frontier labs \citep{bai2022constitutional}. Among rubric-based methods, \citet{cui2023ultrafeedback} evaluate responses using four global criteria rather than prompt-specific checklists, an approach shown to be less effective \citep{viswanathan2025checklists}, and \citet{gunjal2025rubrics} focus on the RLVR setting, whereas our work targets general chat and instruction following. Consequently, we primarily compare against \citet{viswanathan2025checklists} in our work. While \citet{viswanathan2025checklists} employ prompt-specific checklists, they scalarize these per-item rewards into a single score with LLM-generated weights, which can lead to intransitivity. In contrast, we dynamically reweight checklist items based on where the current model is weakest and use game-theoretic techniques to more robustly handle per-item intransitivity. We find that these differences lead to better model performance on both instruction following and general chat benchmarks.

\section{Back to Blackwell: A New Solution Concept for Multi-Objective PFT}
\subsection{Preliminaries}
We begin by discussing relevant background information.

\noindent \textbf{Notation.}
Let $\Xs$ be the space of prompts and $\Ys$ the space of responses.
We consider a class of language models (policies)
$\Pi \subseteq \{\Xs \to \Delta(\Ys)\}$, which we assume is convex and compact. For every prompt $x\in\Xs$, a policy $\pi\in\Pi$ induces a distribution over responses $\pi(x) \in \Delta(Y)$.

We assume access to a \textit{multi-objective pairwise judge},
\begin{equation}
\Ps:\ \Ys\times\Ys\times\Xs \;\to[0,1]^{m(x)},
\end{equation}
which compares responses $y,y'\in\Ys \times \Ys$ along $m(x)$ objectives, returning a vector of size $m(x)$. Rather than assuming a fixed set of criteria for every prompt (e.g., helpfulness, harmlessness), this formulation accounts for the diversity of possible prompts fed to today's LLMs: each prompt may be associated with different (and different numbers of) relevant criteria. For each $k \in [m(x)]$, $\Ps^k(y \succ y'\mid x)$ denotes the probability that $y \succ y'$, conditioned on prompt $x$ and based on criterion $k$. We emphasize that we make no assumptions about the transitivity of preferences along each objective. With a slight abuse of notation, we can extend this to a preference function between two policies $\pi, \pi' \in \Pi \times \Pi$:
\(
\Ps(\pi \succ \pi' \mid x ) \! \triangleq\! \E_{\substack{y\sim\pi\left(x\right),\ y'\sim\pi'\left( x\right)}}\!\left[\Ps\left(y \succ y' \mid x \right)\right]\!\in\!\left[0,1\right]^{m(x)}.
\)

\noindent \textbf{von Neumann Winner.} Under intransitive preferences, there is often no well-defined optimal policy (more formally, no \textit{Condorcet Winner}, \citet{brandt2016handbook}), as every response $y$ loses to some other response $y'$. Thus, we need to consider alternative solution concepts. In the single-objective setting, a popular choice is the \textit{Maximal Lottery} \citep{kreweras1965aggregation, fishburn1984probabilistic} or \textit{von Neumann Winner} \citep{dudik2015contextual}, which is the policy that is as preferred as possible against its worst-case opponent. More formally, a von Neumann Winner is a Nash Equilibrium of the zero-sum game with $\mathcal{P}$ as the payoff matrix. Due to the symmetry of the payoff matrix, simple self-play is provably efficient for computing such equilibria \citep{swamy2024minimaximalist}.

\noindent \textbf{Blackwell Winner.} A \textit{Blackwell Winner}~\cite{bhatia2020preference} is the natural extension of the von Neumann Winner to the multi-objective setting, drawing on Blackwell's notion of a \textit{target set} \citep{Blackwell1956Approachability}. For simplicity, we present this solution concept in the promptless setting. For vector $z$ and set $C$, define the $\ell_{\infty}$ \textit{distance function} as
\(
    \dist_{\infty}(z, C) = \min_{c \in C} ||z-c||_{\infty}.
\)

Suppose we want our policy $\pi$ to be preferred with at least probability $p$ to any policy $\pi' \in \Pi$ along all $m$ objectives:
\(
    \forall \pi' \in \Pi, \mathcal{P}(\pi \succ \pi') \in [p, \infty]^{m},
\)
where $C(p) \triangleq [p, \infty]^{m(x)}$ is the corresponding target set. For any $p \geq 1/2$, we can \emph{linearize} the distance function as
\begin{align*}
       \max_{\pi'\in\Pi} \text{dist}_{\infty}(\Ps(\pi \succ \pi'), C(p)) =  \max_{\pi'\in \Pi} \big ( p - \min_{w \in  \Delta_{m}} \langle w, \Ps(\pi \succ \pi')\rangle\big),
\end{align*}
since $\pi'$ can ensure that any coordinate of $\Ps^k(\pi \succ \pi') \leq 1/2$ (e.g., by setting $\pi'=\pi$). In this case, we can define the $\ell_{\infty}$ Blackwell Winner \citep{bhatia2020preference} as
\begin{equation}\label{bw-no-prompt}
\argmin_{\pi \in \Pi} \max_{\pi' \in \Pi} \big ( p - \min_{w \in  \Delta_{m}} \langle w, \Ps(\pi \succ \pi')\rangle\big).
\end{equation}

\subsection{The Maximum Entropy Blackwell Winner}

Note that removing the constant shift $p$ in \cref{bw-no-prompt} does not affect the solution. Then, reintroducing the prompts $x$, we can express the $\ell_{\infty}$ Blackwell Winner as the solution to
\begin{equation}
    \max_{\pi\in\Pi}\ \min_{w:\mathcal{X}\to\Delta_{m(x)}}\ \min_{\pi'\in\Pi}\quad
 \mathbb{E}_{x}\!\left[\left\langle w(x),\, \Ps(\pi\succ\pi'\mid x)\right\rangle\right].
\end{equation}
While perhaps reasonable for small-scale problems, performing a \textit{best response} over adversary policies $\pi'$ as in the above is infeasible at the scale of today's foundation models. Furthermore, adversarially selecting both the criteria and comparator policy could lead to situations where \textit{all} policies look bad, even though some are qualitatively better than others. Thus, to control the strength of the adversary, we propose to KL regularize it to a reference policy $\pi_{\mathsf{ref}}$:
\begin{align}
\max_{\pi\in\Pi}\ \min_{w:\mathcal{X}\to \Delta_{m(x)}}\ \min_{\pi'\in\Pi}\ 
\mathbb{E}_{x}\!\left[
\langle w(x),\, \Ps(\pi\succ \pi' \mid x)\right\rangle
+\beta\,\KL\!\left(\pi'\left( x\right)\,\|\,\pi_{\reff}\left(x\right)\right)
], \label{eq:opt_w_kl}
\end{align}
where $\beta>0$ controls the strength of the regularization.
Beyond being perhaps the natural choice, the use of KL regularization on the adversary will allow us to elide adversarial training, as we discuss in greater detail below. For now though, let us define the game value of a policy $\pi$ as
\begin{equation}
\begin{split}
V(\pi)\ \triangleq\ \min_{w:\Xs\to \Delta_{m(x)}}\ \min_{\pi'\in\Pi}\ 
\E_{x}\big[\left\langle w(x), \Ps(\pi \succ \pi' \mid x)\right\rangle+\ \beta\,\KL\!\left(\pi'(x)\,\|\,\pi_{\reff}(x)\right)\big].    
\end{split}
\end{equation}
We are now equipped to define the \textit{Maximum Entropy Blackwell Winner (MaxEntBW)}:
\begin{definition}[Maximum Entropy Blackwell Winner (MaxEntBW)]
A policy $\pi^\star\in\arg\max_{\pi\in\Pi} V(\pi)$ is called a \emph{Maximum Entropy Blackwell Winner (MaxEntBW)}.
\label{def:maxentbw}
\end{definition}
In words, a MaxEntBW is a policy that compares favorably to policies in its local neighborhood, regardless of the objective it is evaluated under -- a nuanced form of robustness. Observe that this solution concept is well-defined even under multiple, globally inconsistent sets of preferences.

In contrast to the single-objective setting, the Minimax Theorem \citep{vonneumann1928} doesn't hold in the multi-objective setting when we consider the worst case over objectives \citep{Blackwell1956Approachability}. Thus, the \textit{order of play} matters: the value of the game is different depending on whether learner $\pi$ or adversary $\pi'$ moves first \citep{roth_games_in_learning}. This means that the self-play approaches \citep{swamy2024minimaximalist} that were sufficient in the single-objective setting cannot be directly extended to the multi-objective setting, seemingly indicating that adversarial training may be necessary.

\section{\texttt{PROSPER}: PReference Optimization with a Single Player over Entire Rubrics}

\begin{algorithm*}[t]
\caption{\PROSPER: PReference Optimization with a Single Player over Entire Rubrics}\label{alg: multi-preference}
\begin{algorithmic}[1]
\State \textbf{Input:} A dataset of prompts $\mathcal{D}$, reference policy $\pi_{\reff}$, judge $\Ps:\Ys \times \Ys \times \Xs \to [0,1]^{m(x)}$, KL penalty coefficient $\beta$, time steps $T$, step size $\eta$, policy class $\Pi=\{\pi_{\theta}:\theta\in\Theta\}$, sample size $M$ for gradient estimation.
\State \textbf{Initialize:} $\pi_{\theta_0} = \pi_{\reff}$.
\For{$t = 0, \dots, T-1$}
    \State For each $x\in\mathcal{D}$, collect $\mathcal{D}(x)=\{(y_1,...,y_M, y_1',...,y_M', z,z')\}$, where $y_i\sim\pi_{\theta_t}\left(x\right)$, $y'_j\sim\pi_\reff\left( x\right)$, $z\sim\pi_{\theta_t}\left(x\right)$, $z'\sim \pi_{\reff}\left(x\right)$.
    \State For each $x\in\mathcal{D}$, compute the worst-case objective $\widehat{k}(x)$ via Eq. \ref{eq:hat-k}.
    
    \State  For each $x\in\mathcal{D}$, compute gradient estimates $\widehat{g}_t(x,z)$ and $\widehat{g}_t(x,z')$, where
    \begin{equation*}
    \widehat{g_t}(x,z)=
    \frac{\sum_{j=1}^M\Ps^{\widehat{k}(x)}(z \succ y_j' \mid x)\exp\InParentheses{-\frac{1}{M\beta}\sum_{i=1}^M\Ps^{\widehat{k}(x)}(y_i \succ y_j' \mid x)}}{\sum_{j=1}^M\exp\InParentheses{-\frac{1}{M\beta}\sum_{i=1}^M\Ps^{\widehat{k}(x)}(y_i \succ y_j' \mid x)}}.
    \end{equation*}
    \State Update policy parameters via solving a square loss regression problem:
\[
\theta_{t+1}=\argmin_{\theta\in\Theta}\sum_{\substack{x \in \mathcal{D} \\ z, z' \in \mathcal{D}(x)}}\InParentheses{\frac{1}{\eta}\InParentheses{\ln \frac{\pi_\theta(z \mid x)}{\pi_{\theta_t}(z \mid x)}- \ln \frac{\pi_\theta(z' \mid x)}{\pi_{\theta_t}(z' \mid x)}}-\InParentheses{\widehat{g}_t(x,z) - \widehat{g}_t(x,z')}}^2.
\]

\EndFor
\end{algorithmic}
\end{algorithm*}

Despite the apparent complexity of solving \cref{eq:opt_w_kl} to compute a MaxEntBW (\cref{def:maxentbw}), the underlying structure of the optimization problem admits a strikingly simple solution, as we now explore. We proceed by working from the inner-most optimization problem outwards.

\noindent \textbf{Key Step \#1: Entropy Regularization Ensures a Closed-Solution for $\pi'$.} Fix a $\pi$ and $w$. Then, we need to solve
\begin{align}
\min_{\pi'\in\Pi}\ 
\mathbb{E}_{x}\!\left[
\langle w(x),\, \Ps(\pi\succ\pi'\mid x)\right\rangle
+\beta\,\KL\!\left(\pi'\left( x\right)\,\|\,\pi_{\reff}\left( x\right)\right)
].
\end{align}
 Using $\mathcal{P}_\pi(y'\mid x)=\E_{y\sim\pi(x)}[\Ps(y\succ y'\mid x)]$, we know from \citet{ziebart2008maxent} that this problem has solution
\begin{equation}
    \pi'_\star(y'\mid x)=\frac{\pi_{\reff}(y'\mid x)\exp(-\InAngles{w(x),\mathcal{P}_\pi(y'\mid x)}/\beta)}{Z(w,\pi\mid x)},
\end{equation}
where
\begin{equation}
    Z(w,\pi\mid x)=\E_{y' \sim \pi_{\reff}(x)} [\exp(-\InAngles{w(x),\mathcal{P}_\pi(y'\mid x)}/\beta)]
\end{equation}
is the \textit{partition function} that ensures the above sums to 1 across $\mathcal{Y}$. As in \citet{gupta2025mitigating}, we can then plug this closed-form expression back into \cref{eq:opt_w_kl} to arrive at
\begin{equation}
    \max_{\pi \in \Pi} \min_{w:\mathcal{X}\to \Delta_{m(x)}} \mathbb{E}_{x}[-\beta\log Z(w,\pi|x)]. \label{eq:logz}
\end{equation}
Observe that this optimization problem does not require adversarial training between policies to solve.

\noindent \textbf{Key Step \#2: Simplifying $w$ down to a Prompt-Wise Minimum.} Next, we observe that \cref{eq:logz} is \textit{concave} in $w(x)$. Thus, for $\forall x \in \mathcal{X}$, we know that $w^{\star}(x)$ must be a vertex of the simplex $\Delta_{m(x)}$ (i.e., a standard basis vector). As the set of objectives is usually relatively small in practice, we can simply take a minimum prompt-wise across different objectives to eliminate the optimization over $w$. More formally, we can define the $k$th partition function
\begin{equation}
    Z^k(\pi\mid x) = \E_{y' \sim \pi_{\reff}(x)} \!\left[\exp\!\Big(-\mathcal{P}_\pi^k(y'\mid x)/\beta\Big)\right], 
\end{equation}
and the worst-case coordinate as
\begin{equation}
    k^\star(x) \;=\; \argmin_{k \in [m(x)]} -\beta\log Z^k(\pi\mid x).
\end{equation}
Then, we can further simplify down \cref{eq:logz} to:
\begin{equation}\label{eq:zstar}
        \max_{\pi \in \Pi} \mathbb{E}_{x}\InBrackets{-\beta\log Z^{k^{\star}(x)}(\pi|x)}.
\end{equation}
We have now reduced our optimization problem, which initially had three players, to a single player.

\noindent \textbf{Key Step \#3: Using Regression to Approximate Online Mirror Descent.} We now observe that \cref{eq:zstar} is concave in $\pi$. This is because each $- \beta\log Z^k(\pi\mid x)$ is concave in $\pi$ (see Lemma~\ref{lem: concave_Z} in Appendix~\ref{sec:detailed_proof}) and the minimum of concave functions is also concave. We can therefore reduce this problem to online convex optimization \citep{hazan2023introductiononlineconvexoptimization} and use Online Mirror Descent (in particular, with KL as the Bregman Divergence) for policy optimization.

The first step of a descent algorithm is calculating the gradient. Algebra tells us that the (functional) gradient with respect to $\pi$ at some response $z \in \mathcal{Y}$ of \cref{eq:zstar} is
\begin{align}
 g(z\mid\pi, x) =  \frac{\E_{y'\sim\pi_{\reff}(x)}[\mathcal{P}^{k^{\star}(x)}(z \succ y' \mid x)\exp(-\mathcal{P}_\pi^{k^{\star}(x)}(y'\mid x)/\beta)]}{Z^{k^{\star}(x)}(\pi \vert x)}.      \label{eq:gradz}
\end{align}
On small-scale problems, we could follow this gradient by performing mirror descent independently at each prompt $x \in \mathcal{X}$ and response $y \in \mathcal{Y}$. Unfortunately, both $\mathcal{X}$ and $\mathcal{Y}$ are too large in modern foundation model training for doing so to be practically feasible. In response, we use the regression-based reformulation of mirror descent proposed by \citet{gao2024rebel, gao2025regressingrelativefutureefficient, brantley2025accelerating,zhou2024orchestrating} to arrive at the following update:
\begin{align*}
    \theta_{t+1} =\argmin_{\theta \in \Theta} \mathbb{E}_{\substack{x, z\sim\pi_{\theta_t}(x), \, z'\sim\pi_{\reff}(x)}}
\Big( \frac{1}{\eta} 
    \Big( 
        \ln \frac{\pi_\theta(z \mid x)}{\pi_{\theta_t}(z \mid x)} - 
        \ln \frac{\pi_\theta(z' \mid x)}{\pi_{\theta_t}(z' \mid x)} \Big) - \big( g(z\mid\pi_{\theta_t}, x) - g(z'\mid\pi_{\theta_t}, x) \big) 
\Big)^2.
\end{align*}
As proved by \citet{gao2024rebel}, solving the above optimization problem would correspond to exact mirror descent. While we are unlikely to be able to do this in practice, we can still learn a policy with strong performance guarantees under approximation error, as we show in Section~\ref{sec:theoretical}.

\noindent \textbf{Putting It All Together in the Finite-Sample Setting.} In practice, we also need to estimate the infinite-sample gradient (Eq. \ref{eq:gradz}) based on samples from the current policy and the reference policy. Thus, we draw $M$ samples $y_{1:M} \sim \pi(x)$ and another $M$ samples
$y'_{1:M} \sim \pi_{\reff}(x)$ independently. We then approximate the partition function
\begin{equation}\label{eq:hat-h}
\widehat{Z}^k(x):=\frac{1}{M}\sum_{j=1}^M\InBrackets{\exp{\InParentheses{-\frac{1}{M\beta}\sum_{i=1}^M\Ps^k(y_i \succ y'_j \mid x)}}}, 
\end{equation}
\begin{equation}\label{eq:hat-k}
\widehat{k}(x)=\argmin_{k\in[m(x)]}-\beta\log\widehat{Z}^k(x),
\end{equation}
and arrive at an estimate of the gradient $\widehat{g_t}(x,z)$:
\begin{equation}
    \frac{\sum_{j=1}^M\Ps^{\widehat{k}(x)}(z \succ y_j' \mid x)\exp(-\frac{1}{M\beta}\sum_{i=1}^M\Ps^{\widehat{k}(x)}(y_i \succ y_j' \mid x))}{\sum_{j=1}^M\exp(-\frac{1}{M\beta}\sum_{i=1}^M\Ps^{\widehat{k}(x)}(y_i \succ y_j' \mid x))}.
\end{equation}
We then regress this gradient estimate in terms of the policy to perform updates as in \citet{gao2024rebel}. We call this approach \texttt{PROSPER}: PReference Optimization with a Single Player over Entire Rubrics and outline it in Algorithm \ref{alg: multi-preference}.

\subsection{Performance Guarantee for \texttt{PROSPER}}\label{sec:theoretical}
Recall that the game value of a policy $\pi$ is denoted by $V(\pi)$. Let $V(\pi^{\star})$ be the optimal value. We now prove that under the assumption that the regression problem can be solved accurately in-distribution, \PROSPER~can efficiently learn a policy $\hat{\pi}$ such that $V(\hat{\pi})$ is close to $V(\pi^\star)$. Intuitively, this means $\hat{\pi}$ competes around as favorably as possible with policies in its local neighborhood, regardless of the objective it is judged under. More formally, we assume that
\begin{assumption}\label{ass: regression}
For all $t \in [T]$, we have that for some $\epsilon$,
\begin{align*}
      \mathbb{E}_{\substack{x, \\ z\sim \pi_{\theta_t}(x), \\ z'\sim\pi_{\reff}(x)}} \Bigg( 
\frac{1}{\eta} \Bigg( 
\ln \frac{\pi_{\theta_{t+1}}(z|x)}{\pi_{\theta_t}(z|x)} 
- \ln \frac{\pi_{\theta_{t+1}}(z'|x)}{\pi_{\theta_t}(z'|x)}\Bigg) 
- \left( g(z\mid\pi_{\theta_{t}}, x) - g(z'\mid\pi_{\theta_{t}}, x) \right) 
\Bigg)^2 \leq \epsilon.
\end{align*}
\end{assumption}
Intuitively, this assumption says that there is a policy in our policy class that is able to accurately fit the difference of gradients. We believe this is a reasonable assumption for modern deep networks. Next, given some policy $\pi$, we denote the \textit{concentrability coefficient} as $C_{\pi_{\reff}\rightarrow\pi}=\max_{x,y}\frac{\pi(y\mid x)}{\pi_{\reff}(y\mid x)}$ \citep{kakade2002approximately}, which quantifies how well the reference policy $\pi_{\reff}$ covers $\pi$. We are now ready to state our main theoretical result:

\begin{restatable}{theorem}{FormalBound}\label{thm: formal_bound}
   Under \cref{ass: regression}, after $T$ iterations, there exists some $\hat{\pi} \in \{\pi_{\theta_1},\cdots,\pi_{\theta_T}\}$ such that $V(\pi^\star)-V(\hat{\pi})\le O\InParentheses{\sqrt{1/T}+\sqrt{C_{\pi_{\reff}\rightarrow\pi^\star}\epsilon}}$, where 
$\pi^\star\in\argmax_{\pi\in\Pi} V(\pi)$ denotes an arbitrary MaxEntBW.
\end{restatable}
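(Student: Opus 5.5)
We follow the REBEL-style analysis \citep{gao2024rebel}: treat each iteration of \PROSPER{} as an approximate step of online mirror descent (with KL as Bregman divergence), applied prompt-wise to the concave objective $F(\pi) \triangleq \E_x[-\beta \log Z^{k^\star_\pi(x)}(\pi\mid x)] = \E_x[\min_k -\beta\log Z^k(\pi\mid x)]$. By Key Steps \#1--\#2 we have $V(\pi)=F(\pi)$, so it suffices to bound $F(\pi^\star)-\max_t F(\pi_{\theta_t})$.

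\textbf{Step 1 (linearization).} By \cref{lem: concave_Z} and the fact that a minimum of concave functions is concave, $F$ is concave in $\pi$. The function $g(\cdot\mid\pi_{\theta_t},x)$ from \cref{eq:gradz} is a supergradient of $F$ at $\pi_{\theta_t}$: it is the gradient of the active coordinate $k^\star(x)$, and a gradient of the active piece of a pointwise minimum is a supergradient of the minimum. Concavity then gives
\[
F(\pi^\star)-F(\pi_{\theta_t})\le \E_x\InBrackets{\InAngles{\pi^\star(x)-\pi_{\theta_t}(x),\, g(\cdot\mid\pi_{\theta_t},x)}}.
\]
The payoffs satisfy $g\in[0,1]$, since $g$ is a weighted average of entries of $\Ps$.

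\textbf{Step 2 (approximate mirror descent).} Define the regression residual $\delta_t(x,z)\triangleq \frac1\eta\ln\frac{\pi_{\theta_{t+1}}(z|x)}{\pi_{\theta_t}(z|x)} - g(z\mid\pi_{\theta_t},x)$. \cref{ass: regression} controls $\E[(\delta_t(x,z)-\delta_t(x,z'))^2]\le\epsilon$ with $z\sim\pi_{\theta_t}$ and $z'\sim\pi_{\reff}$. We can write $\pi_{\theta_{t+1}}\propto\pi_{\theta_t}\exp(\eta(g+\delta_t))$, and any $z$-independent shift of $\delta_t$ cancels in the normalization. Running the standard exponentiated-gradient potential argument on $\E_x\KL(\pi^\star(x)\|\pi_{\theta_t}(x))$ yields
\[
\sum_{t}\E_x\InAngles{\pi^\star-\pi_{\theta_t}, g_t}\le \frac{\E_x\KL(\pi^\star\|\pi_{\theta_0})}{\eta}+\eta T+\sum_t\E_x\InAngles{\pi^\star-\pi_{\theta_t},-\delta_t}.
\]
Here the $\eta T$ term uses $g\in[0,1]$. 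Since $\pi_{\theta_0}=\pi_{\reff}$, the initial KL is bounded by $\ln C_{\pi_{\reff}\to\pi^\star}$.

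\textbf{Step 3 (error term, the main obstacle).} We must bound $\E_x[\E_{z\sim\pi^\star}\delta_t(x,z)-\E_{z\sim\pi_{\theta_t}}\delta_t(x,z)]$ using only the in-distribution guarantee, which is stated under $z\sim\pi_{\theta_t}$ and $z'\sim\pi_{\reff}$. The plan is to subtract the baseline $c(x)=\E_{z\sim\pi_{\theta_t}}\delta_t(x,z)$ and write the term as $\E_{z\sim\pi^\star}[\delta_t(x,z)-c(x)]$. We then change measure from $\pi^\star$ to $\pi_{\reff}$ with Cauchy--Schwarz, which costs a factor $\sqrt{C_{\pi_{\reff}\to\pi^\star}}$. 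Finally, Jensen's inequality gives $\E_{z'\sim\pi_{\reff}}(\delta_t(z')-c)^2\le\E_{z\sim\pi_{\theta_t},z'\sim\pi_{\reff}}(\delta_t(z')-\delta_t(z))^2\le\epsilon$. Together these give a per-step error of $O(\sqrt{C_{\pi_{\reff}\to\pi^\star}\epsilon})$. The delicate point is making sure only coverage of $\pi^\star$ by $\pi_{\reff}$ is needed, and not coverage of $\pi_{\theta_t}$; the pairing with $\pi_{\theta_t}$-samples in the assumption is exactly what absorbs the $\pi_{\theta_t}$ side.

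\textbf{Step 4 (conclusion).} Divide by $T$ and set $\eta=1/\sqrt T$. The average, and hence the minimum, of $F(\pi^\star)-F(\pi_{\theta_t})$ over $t$ is then $O(\sqrt{1/T}+\sqrt{C_{\pi_{\reff}\to\pi^\star}\epsilon})$, with the $\ln C$ and $\beta$ dependence absorbed into the constants. Choosing $\hat\pi$ to be the best iterate completes the proof, after an index shift to $\{\pi_{\theta_1},\dots,\pi_{\theta_T}\}$.
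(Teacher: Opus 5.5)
Your route is the paper's. You linearize with the supergradient of the active objective, run the mirror-descent potential argument on the realized update $\pi_{\theta_{t+1}}\propto\pi_{\theta_t}\exp(\eta(g_t+\delta_t))$, and pay for the residual by a change of measure to $\pi_{\reff}$. Your Step 3 is correct and somewhat tidier than the paper's: Jensen over the paired on-policy sample gives $\epsilon$ directly, where the paper's route through Lemma~\ref{lem: smallsquare_pref} ends with $10\epsilon$. Bounding the initial divergence by $\ln C_{\pi_{\reff}\to\pi^\star}$ also fits the initialization $\pi_{\theta_0}=\pi_{\reff}$ better than the paper's uniform $\pi_0$.

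The gap is in Step 2: the claim that the $\eta T$ term needs only $g\in[0,1]$ is false. Write $u_t=g_t+\delta_t$ and $A_t=u_t-\E_{\pi_{\theta_t}}u_t$. Since $\E_{\pi_{\theta_t}}e^{\eta u_t}=1$, the potential argument gives, exactly and per prompt,
\[
\InAngles{\pi^\star-\pi_{\theta_t},\,u_t}=\frac{\KL(\pi^\star\|\pi_{\theta_t})-\KL(\pi^\star\|\pi_{\theta_{t+1}})}{\eta}+\frac{1}{\eta}\ln\E_{z\sim\pi_{\theta_t}}\exp\InParentheses{\eta A_t(z)}.
\]
So the second-order term is the log-moment generating function of the realized exponent, residual included. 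Turning it into $O(\eta)$ via $e^{s}\le 1+s+s^2$ requires $\eta A_t\le 1$ pointwise. \cref{ass: regression} is only an $L^2(\pi_{\theta_t}\otimes\pi_{\reff})$ bound, and that cannot control an exponential moment.

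This matters in practice, as a construction shows:
\begin{itemize}
\item Let $\delta_t$ be constant except on a fresh set $S_t$ of poor responses with negligible $\pi_{\theta_t}$- and $\pi_{\reff}$-mass, where it is larger by $L$.
\item Hold $\pi_{\theta_t}(S_t)e^{\eta L}$ at a fixed constant and let $L\to\infty$. The regression error tends to $0$, and so does your correction term $\InAngles{\pi^\star-\pi_{\theta_t},-\delta_t}$.
\item Yet every update moves a constant fraction of mass onto $S_t$. Iterating, $\pi_{\theta_t}$ concentrates on these responses.
\item When $\pi^\star$ beats them, the left side of your Step 2 inequality grows linearly in $T$, while the right side is $O(\sqrt T)$.
\end{itemize}

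The missing ingredient is a uniform bound on the realized advantage. This is exactly the extra hypothesis $\max_{x,y,t}|A_t(x,y)|\le A$, with $\eta\le 1/A$, that the paper imposes in Lemma~\ref{lem: MD} and absorbs into the $O(\cdot)$. Once you add it, even in the one-sided form $\eta A_t\le 1$, your argument goes through. You get $\ln\E_{\pi_{\theta_t}}e^{\eta A_t}\le\eta^2\E_{\pi_{\theta_t}}A_t^2$. Using $g\in[0,1]$ and \cref{ass: regression}, $\E_x\E_{\pi_{\theta_t}}A_t^2\le\tfrac12+2\epsilon$. So for $\eta=1/\sqrt T$ and $T\ge A^2$, your $\eta T$ term becomes $\eta(\tfrac12+2\epsilon)T$ and the claimed rate follows.
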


In words, this theorem says that as long as supervised learning works (i.e., $\epsilon$ is small), then \texttt{PROSPER} can efficiently compute an approximate MaxEntBW. We defer the detailed proof to Appendix \ref{sec:detailed_proof} and turn our attention to empirically validating our approach on LLM fine-tuning problems.

\section{LLM Fine-Tuning Experiments Setup}
We now outline the setup used for our LLM experiments. We defer all hyperparameters to App. \ref{app:hyperparameters}.

\noindent \textbf{Training Dataset.}
We train on the \textsc{WildChecklists} dataset~\citep{viswanathan2025checklists}. \textsc{WildChecklists} is based on \textsc{WildChat}, a collection of conversations between users and AI language models \citep{zhao2024wildchat}. As described in \citet{viswanathan2025checklists}, \textsc{WildChecklists} was created by using Qwen2.5-72B-Instruct to generate \emph{prompt-specific} rubrics to guide LLM judge evaluation of candidate responses. Note that our framework -- which does not assume a fixed set of criteria across all prompts -- is able to flexibly accommodate this more nuanced feedback. One can view each checklist item as one of the multiple objectives discussed in our preceding formulation.

\noindent \textbf{Policy and Judge Models.}
We fine-tune the 3B and 7B parameter Qwen2.5-Instruct models \citep{qwen2025qwen25technicalreport} and use Qwen3-14B as our judge model \citep{yang2025qwen3technicalreport}. We selected Qwen3-14B due to the relatively high agreement it demonstrated with author judgments across multiple criteria. We include the full prompt templates in Appendix~\ref{sec:Implementation_details} and use the standard averaging over both input orderings to mitigate the positional bias of pairwise judges \citep{zheng2023judging}. All LLM judges score response pairs along a checklist item using a 5-point Likert scale. We query the judge 5 times and average to reduce variance before normalizing to $[0, 1]$.

\noindent \textbf{Benchmarks.} We primarily evaluate our trained models on  \textsc{AlpacaEval}~\citep{dubois2024alpacaeval} and \textsc{Arena-Hard}~\citep{li2024arenahard}, as these benchmarks are in-domain for the \textsc{WildChecklists} training dataset. Additionally, to ensure our training procedure doesn't cause regression on other capabilities, we also report results on multiple-choice QA and reasoning benchmarks: \textsc{IFEval}~\citep{zhou2023ifeval}, \textsc{MMLU}~\citep{hendrycks2020mmlu}, \textsc{ARC}~\citep{clark2018arc}, \textsc{HellaSwag}~\citep{zellers2019hellaswag}, and \textsc{TruthfulQA}~\citep{lin2021truthfulqa}.

\noindent \textbf{Baselines.} Given the similarity of the training data, we primarily evaluate our methods against the \textsc{RLCF} (RL from Checklist Feedback) method of \citet{viswanathan2025checklists}. For each prompt, candidate response, and checklist item, \textsc{RLCF} generates a score by querying the LLM judge. Another LLM (Qwen2.5-72B-Instruct in our experiments) then generates a set of weights used to combine the per-item scores into a per-response reward. Thus, the primary distinction between our method and RLCF is the use of a more principled, game-theoretic aggregation of per-item scores. Less important distinctions include the use of DPO \citep{rafailov2024directpreferenceoptimizationlanguage} on the highest and lowest scoring responses for a prompt rather than gap size-sensitive REBEL \citep{gao2024rebel} for policy optimization and our use of \textit{pairwise} judges rather than a judge that evaluates responses in isolation. To ensure a fair comparison, we re-implement \textsc{RLCF} within our optimization pipeline using REBEL as the policy optimizer and report the resulting performance in the main text. We further evaluate the authors’ released checkpoints and find their performance to be comparable to our re-implementation (see \cref{subsec: 3b}). Both, however, are consistently outperformed by our method.

\noindent \textbf{Ablations.} In addition to evaluating our full method, \texttt{PROSPER}, we evaluate two simplifications, each of which correspond to dropping one of the inner loop optimizations.

\noindent \underline{Ablation \#1: Single-Objective Optimization.} We ablate the importance of explicitly optimizing over multiple objectives by instead asking the LLM judge for a single, aggregate score (i.e., $\forall x \in \mathcal{X}, m(x) = 1$ that reflects all checklist items \textit{jointly}). We refer to this variant as \fullcheck. 

\noindent \underline{Ablation \#2: Fixed Competitor Policy.} Rather than adversarially selecting a competitor policy, we can instead compare our policy to $\pi_{\reff}$. This corresponds to taking $\beta \to \infty$ or a \textit{variational bound} on our full objective \citep{gupta2025mitigating}. Thus, we refer to this variant as \nogame. 

 Across all variants of our method, we use $M=2$ samples for gradient estimation and sample $4$ responses per prompt, rather than a single pair $(z, z')$ as in Algorithm \ref{alg: multi-preference}. We also pool pairs across prompts and only train on the response pairs with the largest gaps in gradient vectors, which we find improves the quality of the final learned model. Intuitively, one can view this process as filtering out response pairs where the LLM judge was not confident in its choice.

\section{\texttt{PROSPER} Enables Efficient Multi-Objective PFT From LLM Judge Feedback}

Our experiments focus on answering four key questions:

\noindent \textbf{1. How much of an issue is LLM judge intransitivity in practice?} We compute the fraction of prompts for which LLM judges have intransitive preferences and for which Condorcet Winner exists. We also evaluate how this fraction changes if we decompose a checklist into individual items and ask the judge to evaluate each independently.

\noindent \textbf{2. Is \texttt{PROSPER} able to optimize multi-objective judge preferences?} We compute the pairwise win-rate between all trained pairs of trained policies as in \citet{munos2023nash} to quantify how well the policies produced by \texttt{PROSPER} optimize nuanced, multi-criteria LLM judge preferences.

\noindent \textbf{3. Do the policies trained via \texttt{PROSPER} exhibit qualitatively strong behavior?} Beyond optimizing the MaxEntBW objective, we also evaluate how well the policies produced by \texttt{PROSPER} perform on the \textsc{AlpacaEval} (i.e., instruction following), \textsc{Arena-Hard} (i.e., general chat), \textsc{IFEval}, \textsc{MMLU}, \textsc{ARC}, \textsc{HellaSwag}, and \textsc{TruthfulQA} benchmarks. We primarily evaluate on the first 2 benchmarks as they are in-domain for our training set \textsc{WildChecklists}, with the rest serving as regression tests.

\noindent \textbf{4. How important are the inner loop optimizations for \texttt{PROSPER}'s practical performance?} We ablate the effect of the adversarially chosen competitor (\texttt{PROSPER-VB}) and adversarially weighted objectives (\texttt{PROSPER-JC}) on trained policy performance across all seven benchmarks.

We now answer these questions in order.

\subsection{Multiple Objectives Mitigate but Don't Fully Eliminate LLM Judge Intransitivity}\label{sec:intransitivity}

\begin{figure}
    \centering
        \includegraphics[width=.35\columnwidth]{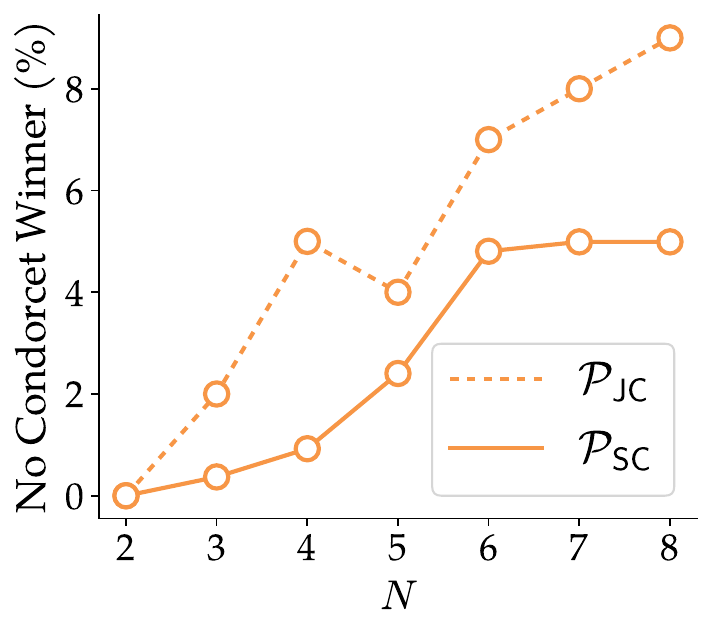}
        \includegraphics[width=.35\columnwidth]{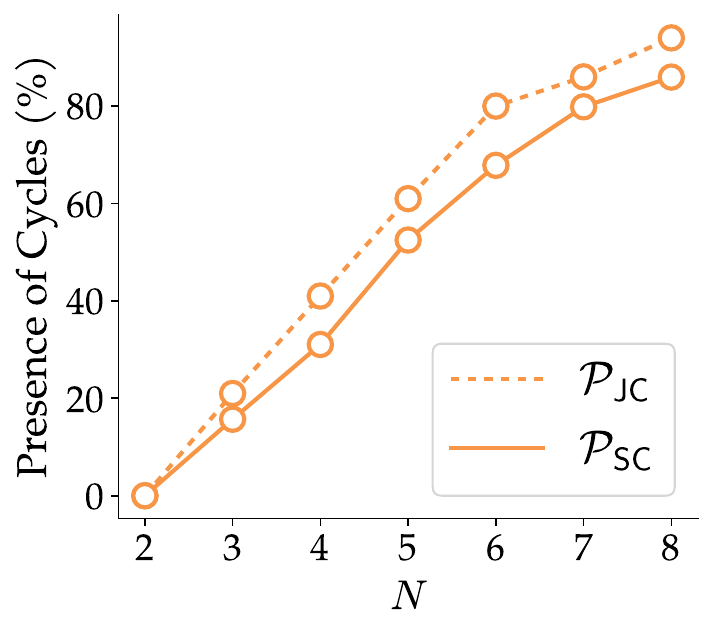}
    \caption{On a held-out set of prompts from \textsc{WildChecklists}, we report the fraction of prompts with no Condorcet Winner \textit{(left)} and Intransitive Preferences \textit{(right)} for both the $\Ps_{\mathsf{JC}}$ (joint check) and $\Ps_{\mathsf{SC}}$ (single check) judges, where $N$ denotes the number of generated responses. We see that splitting up a rubric into multiple items before passing it to the LLM judge (i.e., using $\Ps_{\mathsf{SC}}$ rather than $\Ps_{\mathsf{JC}}$) reduces but doesn't eliminate inconsistent preferences.}
\label{fig: intrans}
\end{figure}

On a held-out set of 100 prompts from the \textsc{WildChecklists} dataset, we evaluate the inconsistency of our LLM judge on $N$ responses from the Qwen2.5-7B-Instruct base model. We compute both the percentage of prompts for which we see intransitive preferences (i.e., those where the pairwise preferences form a cycle of any length \citep{xu2025investigatingnontransitivityllmasajudge}) and there is no Condorcet Winner (i.e., those where no fixed response is preferred to all others). We note that these two quantities are not the same, as cycles may only occur across relatively weak responses. We compare two judges: $\Ps_{\mathsf{JC}}$ (joint check) and $\Ps_{\mathsf{SC}}$ (single check). For $\Ps_{\mathsf{JC}}$, the LLM judge takes the full checklist as input and outputs a single, aggregate score to indicate how much it prefers one response over the other. For $\Ps_{\mathsf{SC}}$, the LLM judge evaluates both responses along each checklist item independently. We compute metrics for each (prompt, item) pair before reporting the average over items and prompts.

In Figure \ref{fig: intrans}, we see that both judges frequently have intransitive preferences for large enough batch size $N$, echoing the findings of \citet{xu2025investigatingnontransitivityllmasajudge}. This underscores the importance of robust algorithms like \texttt{PROSPER} for learning from such inconsistent feedback. We also see that splitting up a checklist into multiple items (i.e., using $\Ps_{\mathsf{SC}}$ rather than $\Ps_{\mathsf{JC}}$) improves both metrics but does not completely eliminate the problems of having no Condorcet Winner and intransitive preferences. Thus, while checklist / rubric rewards are an important first step, we still need to take care in the multi-objective setting to handle LLM judge intransitivity.

\subsection{\texttt{PROSPER} Effectively Optimizes Multi-Objective LLM Judge Preferences}
\begin{figure}[H]
\centering
   \begin{minipage}[c]{.35\columnwidth}
   \centering
    \includegraphics[width=\columnwidth]{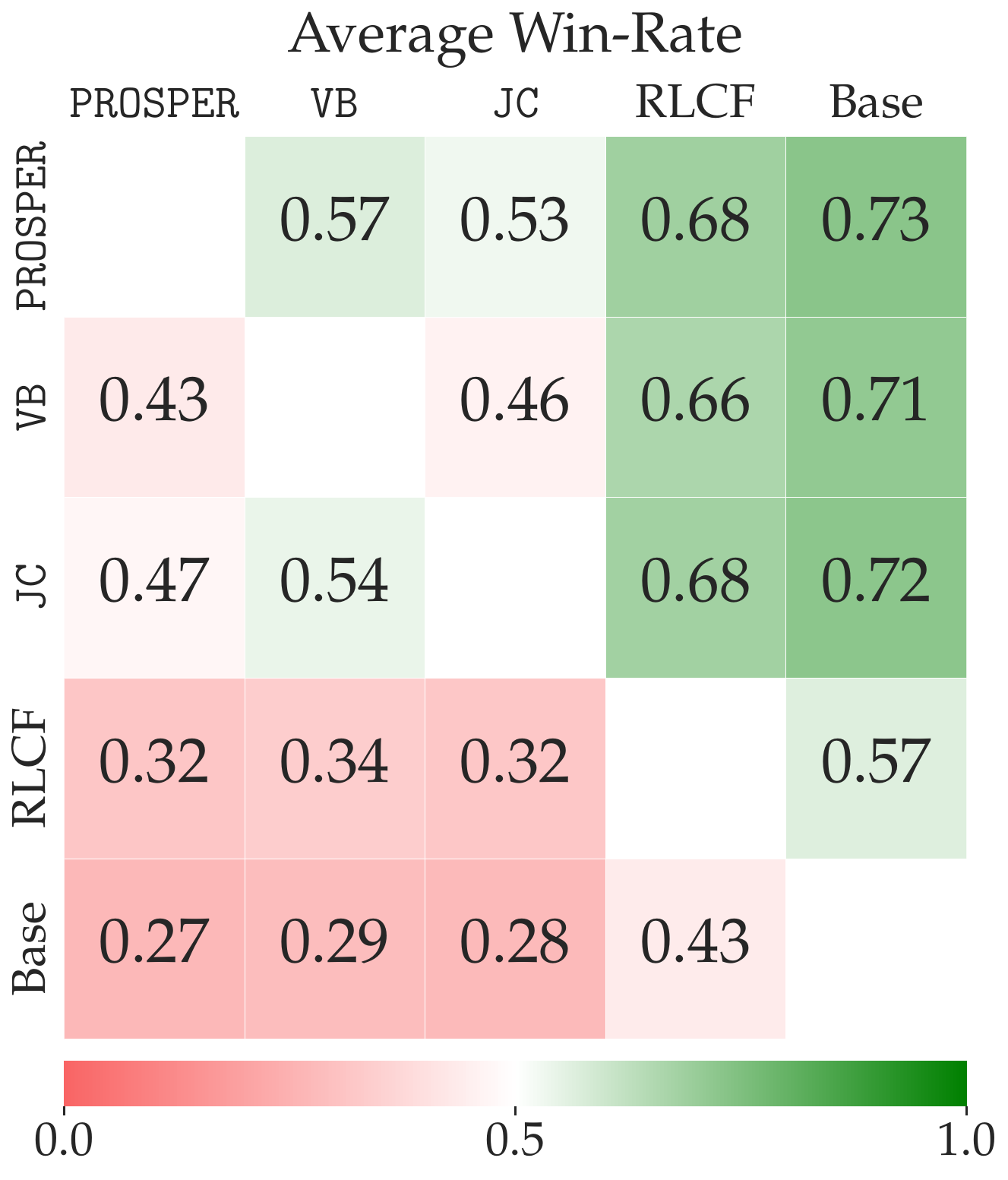}   
   \end{minipage}
   \hfill
   \begin{minipage}[c]{.61\columnwidth}
    \caption{We consistently see policies trained via \texttt{PROSPER} outperform RLCF (roughly $2/3$ of the time), baseline (roughly $3/4$ of the time), and ablation method policies, as measured by LLM judge win-rates on held-out prompts. This indicates that \texttt{PROSPER} is able to more effectively optimize nuanced, multi-criteria LLM judge preferences than the other methods we consider.
    \label{fig:wr}} 
   \end{minipage}
\end{figure}
Next, we ask the LLM judge to compare all trained 7B policies (\texttt{PROSPER}, \texttt{PROSPER-JC}, \texttt{PROSPER-VB}, RLCF, and the base Qwen2.5-7B-Instruct) on a set of 500 held-out prompts from the test set. For each pair of policies and prompt, we sample responses from each policy. We then ask our judge to evaluate each pair of responses along all criteria and pick a winning policy for each. We then average over checklist items, before finally averaging over prompts to compute a \textit{win rate}.

In Figure \ref{fig:wr}, we see that \texttt{PROSPER} produces policies that better optimize for judge preferences than the base policy and RLCF baseline. Zooming in on the top row, we see that \texttt{PROSPER} produces policies that generate responses preferred to the base policy responses roughly $3/4$ of the time and the RLCF policy responses roughly $2/3$ of the time. Furthermore, we see that the full version of \texttt{PROSPER} outperforms both the \texttt{PROSPER-JC} and \texttt{PROSPER-VB} ablations, indicating that both inner loop optimizations improve efficacy at optimizing rubric-based LLM judge preferences.

\subsection{\texttt{PROSPER} Produces Policies With Strong Performance on Multiple Benchmarks}
\begin{table*}[t]
\centering
\renewcommand{\arraystretch}{1.2}
\begin{tabular}{lccccc}
\hline
\textbf{Model} 
& \multicolumn{2}{c}{\textbf{Arena-Hard}} & \multicolumn{2}{c}{\textbf{AlpacaEval}} & \\
\hline
& \textbf{Vanilla} & \textbf{Style-Controlled}
& \textbf{Vanilla} & \textbf{Length-Controlled} &\\
\hline

\textit{Qwen2.5-7B-Instruct} & 42.4 & 44.2 & 37.1& 25.32&\\
\quad + RLCF      & 42.5 & 43.9 &  41.4&17.24&\\
\quad + \fullcheck  &  44.2 & 42.0 &  \underline{55.3} & \textbf{38.21}&\\
\quad + \nogame     & \underline{47.6} & \underline{45.5} &  51.2&33.64 &\\
\quad + \PROSPER  & \textbf{49.2} & \textbf{46.1} &  \textbf{55.4} & \underline{37.61} &\\
\hline
\end{tabular}
\caption{Highest score is \textbf{bolded} and second highest \underline{underlined}. Across instruction following and general chat benchmarks, \texttt{PROSPER} consistently outperforms the RLCF baseline and \texttt{PROSPER-JC} and \texttt{PROSPER-VB} baselines at the 7B parameter scale.}
\label{tab: arena+alpaca}
\end{table*}
\begin{table*}[!t]
\centering
\renewcommand{\arraystretch}{1.25}
\begin{tabular}{lcccccccc}
\hline
\textbf{Model} 
& \textbf{IFEval} 
& \textbf{MMLU} 
& \textbf{ARC} 
& \textbf{HellaSwag} 
& \textbf{TruthfulQA} 
& \textbf{Avg} 
& \\
\hline

\textit{Qwen2.5-7B-Instruct} & \underline{71.16} & 63.39& 85.23& \textbf{74.54}& \textbf{68.42}& \textbf{72.55} &  & \\
\quad + RLCF                   &67.83 &\textbf{63.47} &85.07 & \underline{74.18}& \underline{65.97}& 71.30& & \\
\quad + \fullcheck         & 69.87&63.37 &\underline{85.41} & \underline{74.18}& 65.61&71.69 & & \\
\quad + \nogame      & \textbf{73.75}&63.36 &85.32 &74.04 & 65.6&\underline{72.32} & & \\
\quad + \PROSPER                & 69.50&\underline{63.42} &\textbf{85.49} & 74.12& \underline{65.97} &71.70 & &\\
\hline
\end{tabular}
\caption{Highest score is \textbf{bolded} and second highest \underline{underlined}. By comparing the first and last row of the table, we see that \texttt{PROSPER} does not lead to significant model commonsense reasoning and general knowledge QA capability degradation at the 7B parameter scale.}
\label{tab: others}
\end{table*}

In Table \ref{tab: arena+alpaca}, we report the performance of the 7B policies produced by all methods on \textsc{AlpacaEval 2.0} (instruction following) and \textsc{Arena-Hard} (general chat). For both benchmarks, we use gpt-5-mini as the LLM judge. We see that \texttt{PROSPER} outperforms RLCF and the \texttt{PROSPER-JC} and \texttt{PROSPER-VB} ablations on \textsc{Arena-Hard} and Vanilla \textsc{AlpacaEval} and is a close second place for length-controlled \textsc{AlpacaEval}. This indicates that beyond merely optimizing judge preferences, \texttt{PROSPER} trains qualitatively strong policies. This also provides evidence that beyond being robust in theory, the MaxEntBW solution concept appears to correspond to qualitatively strong policies. Furthermore, by comparing the first and last rows of Table \ref{tab: others}, we see that \texttt{PROSPER} does not degrade model commonsense reasoning and knowledge QA capabilities (which are out-of-domain for our training set), indicating that neither our solution concept nor training scheme is so adversarial as to produce overly conservative policies. We find similar results at the 3B scale -- see Appendix \ref{subsec: 3b}.

\section{Conclusion}
We introduce the \textit{Maximum Entropy Blackwell Winner}: a solution concept for learning from multiple, potentially intransitive preferences. This is frequently the case when learning from rubric-based LLM judges. We then derive \texttt{PROSPER}, a provably efficient algorithm for computing MaxEntBWs, before using it to fine-tune LLMs at the 3B and 7B parameter scales. %

\section{Acknowledgments}
GKS, JZ, and ZSW were supported in part by a STTR grant. We thank Zhaolin Gao for in-depth feedback on how best to implement \texttt{PROSPER}. GKS also thanks Anca Dragan for a conversation at ICML 2023 about the two root causes of intransitive preferences that eventually inspired this project.

\section{Contribution Statements}
\begin{itemize}
    \item \textbf{JZ} and \textbf{LZ} co-led the project, deriving the core algorithm and implementing it on language models. Together, they wrote the first draft of the paper.
    \item \textbf{KG} helped with running experiments and writing the paper. \textbf{ZY} helped with evaluating trained models across multiple benchmarks.
    \item \textbf{GS} proposed the project, suggested the algorithmic approach, helped debug experiments, helped advise the project, and wrote most of the final draft.
    \item \textbf{ZSW} advised the project, helping with formalizing the algorithm, writing the paper, and running experiments.
\end{itemize}

\bibliography{reference}
\bibliographystyle{abbrvnat}

\newpage
\appendix
\crefalias{section}{appendix}
\onecolumn

\section{Supplementary Results}\label{subsec: 3b}
We provide the benchmark performance on Qwen2.5-3B-Instruct in Table~\ref{tab:sup_1} and Table~\ref{tab:sup_2}.
\begin{table}[h]
\centering
\begin{tabular}{lccccc}
\hline
\textbf{Model} 
& \multicolumn{2}{c}{\textbf{Arena-Hard}} & \multicolumn{2}{c}{\textbf{AlpacaEval}} & \\
\hline
& \textbf{Vanilla} & \textbf{Style-Controlled}
& \textbf{Vanilla} & \textbf{Length-Controlled} &\\
\hline

\textit{Qwen2.5-3B-Instruct} & 20.6 & 19.6 & 24.1&12.79&\\
\quad + RLCF               & 22.8 &  22.9 &26.2& 11.62&  \\
\quad + \fullcheck        & 22.9 & 23.4 &  24.7& 7.4&\\
\quad + \nogame      & \underline{25.1} & \underline{23.6} & \underline{34.8} & \underline{19.99} &\\
\quad + \PROSPER  & \textbf{26.0} & \textbf{24.6} & \textbf{35.8} & \textbf{20.94} &\\
\hline
\end{tabular}

\caption{\textbf{Model performances on instruction-following and general-chat benchmarks (fine tuning Qwen2.5-3B-Instruct).}  Highest score presented in \textbf{bold} and second highest \underline{underlined}. Across instruction-following and preference-alignment benchmarks (AlpacaEval 2.0 and Arena-Hard) evaluated using gpt-5-mini as the judge, \texttt{PROSPER} consistently outperforms the \texttt{PROSPER} ablations (joint-check and variational-bound), while also improving over the RLCF baseline.}
\label{tab:sup_1}
\end{table}

\begin{table}[h]
\centering
\begin{tabular}{lcccccccc}
\hline
\textbf{Model} 
& \textbf{IFEval} 
& \textbf{MMLU} 
& \textbf{ARC} 
& \textbf{HellaSwag} 
& \textbf{TruthfulQA} 
& \textbf{Avg} 
& \\
\hline
\textit{Qwen2.5-3B-Instruct} &\textbf{59.70} & \textbf{56.45}&\underline{79.43} &\textbf{67.68} &\textbf{64.62} & \textbf{65.58}& & \\
\quad + RLCF & 58.04& 56.28& 79.35&66.88 & 62.17& 64.54& & \\
\quad + \fullcheck             &53.24 &56.16 & 79.27& 66.87& 61.69& 63.45& & \\
\quad + \nogame               &57.30 & \underline{56.34}&\textbf{79.52} &\underline{67.36} &\underline{62.67} & 64.64& \\
\quad + \PROSPER      & \underline{58.78}&56.31 &79.27 & 67.14&62.18 &\underline{64.74} & & \\
\hline
\end{tabular}
\caption{\textbf{Model performance on instruction, knowledge, and commonsense benchmarks.}
The highest score is shown in \textbf{bold} and the second-highest is \underline{underlined}.
Across these benchmarks, post-training methods incur small performance drops; among them, \texttt{PROSPER} ranks first on Qwen2.5-3B-Instruct (by average score).}
\label{tab:sup_2}
\end{table}

Also, we provide the results on the checkpoint released by \cite{viswanathan2025checklists}.

\begin{table}[h]
\centering

\begin{minipage}{\columnwidth}
\centering
\renewcommand{\arraystretch}{1.2}
\resizebox{\columnwidth}{!}{
\begin{tabular}{lcccc}
\hline
\textbf{Model} 
& \multicolumn{2}{c}{\textbf{Arena-Hard}} 
& \multicolumn{2}{c}{\textbf{AlpacaEval}} \\
\hline
& \textbf{Vanilla} 
& \textbf{Style-Controlled}
& \textbf{Vanilla} 
& \textbf{Length-Controlled} \\
\hline
\textit{Qwen2.5-7B-Instruct} 
& 42.4 & 44.2 & 37.1 & 25.32 \\
\quad + RLCF (Released)     
& 42.2 & 44.6 & 42.3 & 28.08 \\
\hline
\end{tabular}
}
\end{minipage}

\vspace{0.8em}

\begin{minipage}{\columnwidth}
\centering
\renewcommand{\arraystretch}{1.25}
\resizebox{\columnwidth}{!}{
\begin{tabular}{lcccccc}
\hline
\textbf{Model} 
& \textbf{IFEval} 
& \textbf{MMLU} 
& \textbf{ARC} 
& \textbf{HellaSwag} 
& \textbf{TruthfulQA} 
& \textbf{Avg} \\
\hline
\textit{Qwen2.5-7B-Instruct} 
& 71.16 & 63.39 & 85.23 & 74.54 & 68.42 & 72.55 \\
\quad + RLCF (Released) 
& 67.83 & 63.16 & 85.41 & 74.51 & 67.44 & 71.67 \\
\hline
\end{tabular}
}
\end{minipage}

\end{table}

\section{Detailed Proof in Section~\ref{sec:theoretical}}\label{sec:detailed_proof}
To prove Theorem \ref{thm: formal_bound}, we break the proof into three different parts:
\begin{enumerate}
\item Concavity of the optimization objective. 
\item Subgradient computation and estimation.
\item Convergence result.
\end{enumerate}
\subsection{Concavity of the Optimization Objective}
Recall that the optimization objective is 
\[\max_{\pi\in\Pi}\ \min_{w:\Xs\rightarrow\Delta_{m(x)}}\ \min_{\pi'\in\Pi}\ 
\E_{x}\big[\langle w(x), \Ps(\pi\succ\pi'\mid x)\rangle +\ \beta\,\KL\!\big(\pi'( x)\,\|\,\pi_{\reff}( x)\big)\big].\]
Using the results from~\citet{ziebart2008maxent}, $\pi'$ has a closed-form solution, and we can then simplify the objective to
\[\max_{\pi\in\Pi}\ \min_{w:\Xs\rightarrow\Delta_{m(x)}}\ \E_x[-\beta\log Z(w,\pi|x)],\]
where \[
    Z(w,\pi\mid x)=\E_{y' \sim \pi_{\reff}(x)} [\exp(-\InAngles{w(x),\mathcal{P}_\pi(y'\mid x)}/\beta)] .
\] 
To further simplify the objective, we invoke the following lemma:
\begin{lemma}\label{lem: concave_Z}
$-\beta\log Z(w,\pi\mid x)$ is concave in $w(x)$ and $\pi$.
\end{lemma}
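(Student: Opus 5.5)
Fix a prompt $x$. The key observation is that the map $(w(x),\pi(x))\mapsto u(y')\triangleq\InAngles{w(x),\mathcal{P}_\pi(y'\mid x)}$ is linear in each argument separately, for every $y'$. Indeed,
\[
\mathcal{P}_\pi^k(y'\mid x)=\sum_y \pi(y\mid x)\,\Ps^k(y\succ y'\mid x)
\]
is linear in $\pi(x)$, and the inner product is linear in $w(x)$. The plan is to write $-\beta\log Z(w,\pi\mid x)=F(u)$, where
\[
F(u)\triangleq-\beta\log \E_{y'\sim\pi_{\reff}(x)}\InBrackets{\exp(-u(y')/\beta)}.
\]
Concavity in $w(x)$ and concavity in $\pi$ then both follow if $F$ is concave in $u$, since a concave function composed with an affine map is concave. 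Throughout I read the lemma as concavity in each argument separately, with the other held fixed, because that is all that Key Steps \#2 and \#3 use. Joint concavity is not claimed and should not be expected, since $u$ is bilinear in $(w,\pi)$.

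The core step is therefore to show that $F$ is concave. Write $F(u)=-\beta\,\mathrm{LSE}(-u/\beta)$, where
\[
\mathrm{LSE}(v)=\log\sum_{y'}\pi_{\reff}(y'\mid x)e^{v(y')}
\]
is a weighted log-sum-exp. This function is convex, and I would verify that directly via Hölder's inequality. For $\lambda\in[0,1]$,
\[
\sum_{y'}\pi_{\reff}(y'\mid x)\,e^{\lambda v_1(y')+(1-\lambda)v_2(y')}\le\Big(\sum_{y'}\pi_{\reff}(y'\mid x)e^{v_1(y')}\Big)^{\lambda}\Big(\sum_{y'}\pi_{\reff}(y'\mid x)e^{v_2(y')}\Big)^{1-\lambda}.
\]
Taking logs gives convexity. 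The same argument works for an expectation over a general $\Ys$, and no finiteness assumption is needed. Since $v=-u/\beta$ is affine in $u$, $\mathrm{LSE}(-u/\beta)$ is convex in $u$, and multiplying by $-\beta<0$ makes $F$ concave.

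I would then assemble the pieces. For fixed $\pi$, the map $w(x)\mapsto u$ is linear, so $w(x)\mapsto -\beta\log Z(w,\pi\mid x)$ is concave on $\Delta_{m(x)}$. For fixed $w$, the map $\pi(x)\mapsto u$ is linear, so $\pi\mapsto-\beta\log Z(w,\pi\mid x)$ is concave on the convex set $\Pi$. Taking $\E_x$ preserves concavity.

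The only point I expect to need care is the infinite-response case: I must check that $Z$ is finite and positive. This holds because $\Ps\in[0,1]$, so the exponent lies in $[-1/\beta,0]$ and $Z\in[e^{-1/\beta},1]$. Hence the logarithm is well defined and the Hölder step applies verbatim.
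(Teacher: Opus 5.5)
Your proof is correct and follows essentially the same route as the paper. Both arguments use H\"older's inequality to show the partition function is log-convex in the exponent, together with the fact that $\langle w(x),\mathcal{P}_\pi(y'\mid x)\rangle$ is linear in $w(x)$ for fixed $\pi$ and linear in $\pi$ for fixed $w$. Factoring through $u$ just makes explicit the ``similar argument'' the paper invokes for $\pi$. Your remark that only separate (not joint) concavity holds, and that this is all Key Steps \#2 and \#3 need, is accurate.
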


\begin{proof}
   By Holder's inequality, for any $w_1,w_2$ and $\theta\in[0,1]$,
   \begin{align*}
 Z(\theta w_1+(1-\theta)w_2,\pi\mid x)&=\E_{y'\sim\pi_{\reff}}[\exp(-\InAngles{w_1(x),\Ps_\pi(y'\mid x)})^\theta\cdot\exp(-\InAngles{w_2(x),\Ps_\pi(y'\mid x)})^{1-\theta}]\\
 &\le \E_{y'\sim\pi_{\reff}}[\exp(-\InAngles{w_1(x),\Ps_\pi(y'\mid x)})]^\theta\cdot \E_{y'\sim\pi_{\reff}}[\exp(-\InAngles{w_2(x),\Ps_\pi(y'\mid x)})]^{1-\theta}\\
 &= Z(w_1,\pi\mid x)^\theta Z(w_2,\pi\mid x)^{1-\theta}.
\end{align*}
 Taking $-\beta\log(\cdot)$ on both sides, we have 
 \[-\beta \log  Z(\theta w_1+(1-\theta)w_2,\pi\mid x) \geq \theta(-\beta \log Z(w_1,\pi\mid x)) + (1-\theta)(-\beta \log Z(w_2,\pi\mid x)).\]
 Thus, $-\beta\log Z(w,\pi\mid x)$ is concave in $w(x)$. A similar argument applies for $\pi$.
\end{proof}
By using a concave-form statement of Bauer’s maximum principle~\cite{bauer1958minimalstellen} as shown in theorem~\ref{thm:concave-min-at-extreme}, it suffices to enumerate only the unit vector in $\Delta_{m(x)}$ for each $w$.
\begin{theorem}[Bauer's maximum principle, concave form]\label{thm:concave-min-at-extreme}
Let $C$ be a nonempty compact convex subset of a (Hausdorff) locally convex topological vector space and let $\Cs$ be the set of extreme points in $C$. Let $f: C \to \mathbb{R}$ be concave and lower semicontinuous (in particular, continuous suffices).
Then $f$ attains its minimum at an extreme point of $C$:
\[
\min_{x\in C} f(x) \;=\; \min_{x\in \Cs} f(x).
\]
\end{theorem}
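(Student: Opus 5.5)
The statement to prove is Bauer's maximum principle in concave form: a concave, lower semicontinuous $f$ on a nonempty compact convex set $C$ attains its minimum at an extreme point of $C$. The plan is to reduce to the classical convex form, i.e.\ maximizing an upper semicontinuous convex function, via $g=-f$. I will give a self-contained argument through the Krein--Milman machinery rather than cite the convex form as a black box.

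First, existence of a minimizer. Since $f$ is lower semicontinuous and $C$ is compact, $f$ is bounded below and attains its infimum $\mu=\min_{x\in C} f(x)$. Let $S=\{x\in C: f(x)=\mu\}=\{x\in C : f(x)\le \mu\}$. This is a sublevel set of a lower semicontinuous function, so it is closed in $C$ and hence compact. It is nonempty.

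The key observation is that $S$ is an \emph{extremal subset} (a face) of $C$. Suppose $x\in S$ is written as $x=\theta a+(1-\theta)b$ with $a,b\in C$ and $\theta\in(0,1)$. Concavity gives
\[
\mu=f(x)\ge \theta f(a)+(1-\theta) f(b)\ge \mu .
\]
This forces $f(a)=f(b)=\mu$, so $a,b\in S$. Note that $S$ need not be convex in general, so I will work with closed extremal subsets rather than faces.

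Next, I show that a nonempty compact extremal subset contains an extreme point of $C$, following the standard Krein--Milman argument. Order the nonempty compact extremal subsets of $C$ contained in $S$ by reverse inclusion. Any chain has nonempty intersection by compactness (finite intersection property), and the intersection is again compact and extremal. By Zorn's lemma there is a minimal such set $E$.

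I claim $E$ is a singleton. If $p\ne q$ both lie in $E$, Hahn--Banach in the locally convex Hausdorff space gives a continuous linear functional $\ell$ with $\ell(p)\ne\ell(q)$. The set $E'=\{e\in E:\ell(e)=\max_E \ell\}$ is then nonempty, compact, and extremal in $C$: it is extremal in $E$ because $\ell$ is linear, and extremality is transitive. It is also strictly smaller than $E$, contradicting minimality. So $E=\{e\}$, and extremality of $\{e\}$ in $C$ means exactly that $e$ is an extreme point of $C$.

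Since $e\in S$, we get $f(e)=\mu$. Hence $\min_{C} f=\min_{\Cs} f$, where the right-hand minimum is attained because $e\in\Cs$ and $f\ge\mu$ everywhere. The main point requiring care is the Zorn's lemma step: I need to verify that intersections of chains of closed extremal sets remain extremal, which is immediate from the definition, and nonempty, which uses compactness. The separation step needs the Hausdorff and local convexity assumptions to obtain a continuous functional separating points.

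In the paper's application, $C=\Delta_{m(x)}$ is finite-dimensional and $f$ is continuous by Lemma~\ref{lem: concave_Z}. There one could bypass all of the above by writing any minimizer as a convex combination of vertices and applying concavity directly.
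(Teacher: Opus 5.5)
Your proof is correct. It is the standard textbook argument for Bauer's principle:
\begin{itemize}
\item lower semicontinuity on the compact set $C$ makes the minimizer set $S=\{f\le\mu\}$ nonempty and compact;
\item concavity makes $S$ an extremal subset of $C$;
\item the Zorn-plus-Hahn--Banach step from the Krein--Milman proof then finds an extreme point of $C$ inside $S$.
\end{itemize}

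The paper gives no proof of this statement. It cites \citet{bauer1958minimalstellen} as a black box and uses it only to replace the minimization over $w(x)\in\Delta_{m(x)}$ by a minimum over the $m(x)$ standard basis vectors. Your version is self-contained and shows where each hypothesis is used. Semicontinuity gives attainment and closedness of $S$, compactness handles the chain step, and Hausdorff local convexity supplies point-separating continuous functionals.

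For what the paper actually needs, your closing finite-dimensional remark is the more economical proof. Write any $w\in\Delta_{m(x)}$ as $\sum_k\lambda_k e_k$ and apply finite Jensen to get $f(w)\ge\min_k f(e_k)$. This needs neither semicontinuity nor the prior existence of a minimizer.

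One small misattribution in that remark: Lemma~\ref{lem: concave_Z} gives concavity, not continuity. Continuity of $w\mapsto-\beta\log Z(w,\pi\mid x)$ follows instead because the integrand lies in $[e^{-1/\beta},1]$ and is continuous in $w$, so dominated convergence applies. The vertex argument does not need continuity anyway.
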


By \cref{thm:concave-min-at-extreme}, the optimization problem becomes $\max_{\pi \in \Pi} \mathbb{E}_{x}[-\beta\log Z^{k^{\star}(x)}(\pi|x)]$. Now we show 
\[\mathbb{E}_{x}\InBrackets{-\beta\log Z^{k^{\star}(x)}(\pi|x)}=\mathbb{E}_{x}\InBrackets{\min_{k\in [m(x)]}-\beta\log Z^{k}(\pi|x)}\] 
is concave in $\pi$. It suffices to show  $\min_{k\in [m(x)]}-\beta\log Z^{k}(\pi|x)$ is concave in $\pi$. This is because $-\beta\log Z^k$ is concave in $\pi$ and the minimum of $m(x)$ concave functions is a concave function. Then the question boils down to \textbf{(sub)gradient computation and estimation}.
\subsection{Subgradient computation and estimation}
By a classical result in convex analysis~\cite{nesterov2018lectures}, we have:
\begin{lemma}\label{lem:subgradient_max}
Given any integer $N$, Let \(f(x)=\max_{i\in[N]} f_i(x)\), where each \(f_i\) is convex. Then the subdifferential of \(f\) at \(x\) satisfies
\[
\partial f(x)=\mathrm{conv}\bigl\{\partial f_i(x): i\in \argmax_i f_i(x)\bigr\}.
\]
\end{lemma}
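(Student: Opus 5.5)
Since this is the classical Danskin-type characterization of the subdifferential of a pointwise maximum of finitely many convex functions, I would prove the two inclusions separately. Throughout, I fix $x$ and write $I(x)=\argmax_i f_i(x)$ for the active index set, so that $f(x)=f_i(x)$ for every $i\in I(x)$.

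\emph{Inclusion $\supseteq$.} Take $i\in I(x)$ and $g\in\partial f_i(x)$. For every $u$ we have $f(u)\ge f_i(u)\ge f_i(x)+\langle g,u-x\rangle=f(x)+\langle g,u-x\rangle$, so $g\in\partial f(x)$. Because $\partial f(x)$ is convex, being an intersection of half-spaces, it contains the convex hull of all such $g$. This gives $\mathrm{conv}\{\partial f_i(x): i\in I(x)\}\subseteq\partial f(x)$.

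\emph{Inclusion $\subseteq$.} This is the main step, and I would argue it through directional derivatives and support functions.
\begin{itemize}
\item First, I show $f'(x;d)=\max_{i\in I(x)} f_i'(x;d)$. Inactive indices satisfy $f_i(x)<f(x)$, so by continuity (convex functions on an open domain, e.g.\ finite-valued on $\mathbb{R}^n$) they stay strictly below $f$ on a neighborhood of $x$. Hence, for small $t>0$, $f(x+td)=\max_{i\in I(x)} f_i(x+td)$. Subtracting $f(x)=f_i(x)$, dividing by $t$, and letting $t\downarrow 0$ gives the formula, since a max of finitely many convergent difference quotients converges to the max of the limits.
\item Second, I use the standard fact that for a convex function finite near $x$, $f_i'(x;\cdot)$ is the support function of the nonempty compact convex set $\partial f_i(x)$. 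It follows that $f'(x;\cdot)$ is the support function of $K:=\mathrm{conv}\bigcup_{i\in I(x)}\partial f_i(x)$, which is compact as the convex hull of finitely many compact sets.
\item Third, any $g\in\partial f(x)$ satisfies $\langle g,d\rangle\le f'(x;d)=\sigma_K(d)$ for all $d$. By the separating hyperplane theorem, applied to the closed convex set $K$, this forces $g\in K$.
\end{itemize}

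\emph{Main obstacle.} The delicate step is the support-function identity $f_i'(x;d)=\max_{g\in\partial f_i(x)}\langle g,d\rangle$. Its proof needs the max to be attained, which follows from compactness of $\partial f_i(x)$ and the Hahn--Banach/separation argument. Since the lemma is cited from \cite{nesterov2018lectures}, I would invoke that fact as standard rather than reprove it.

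In the paper's use, the lemma is applied with $f_i=\beta\log Z^i(\pi\mid x)$, i.e.\ to the negatives of the concave functions $-\beta\log Z^i$. It therefore yields the superdifferential of $\min_k -\beta\log Z^k$ as the convex hull of the gradients of the active coordinates $k^\star(x)$. This justifies using $g(z\mid\pi,x)$ from \cref{eq:gradz} as a valid supergradient.
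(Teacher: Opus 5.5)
Your proof is correct, and it is the standard argument behind this lemma: the directional derivative of the max equals the max over the active indices, $f_i'(x;\cdot)$ is the support function of $\partial f_i(x)$, and separation finishes the argument; the paper gives no proof of its own and simply cites the result from Nesterov. Your continuity/open-domain assumption is genuinely needed, because the paper's statement omits the usual qualification $x\in\mathrm{int}\,\mathrm{dom}\, f$ and the formula can fail at boundary points of extended-valued $f_i$. The assumption does hold in the application: each $\beta\log Z^k(\pi\mid x)$ is a log-sum-exp of affine functions of $\pi$, so it extends to a finite convex function on the whole ambient space.
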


\begin{lemma}\label{lem:gradient_calculate}
\[
    g^k(z\mid\pi,x)\triangleq[\nabla_\pi-\beta\log Z^k(\pi\mid x)](z)=\frac{\E_{y'\sim\pi_{\reff}}[\Ps^k(z\succ y'\mid x)\exp(-\E_{y\sim\pi}[\Ps^k(y\succ y'\mid x)]/\beta)]}{\E_{y'\sim\pi_{\reff}}[\exp(-\E_{y\sim\pi}[\Ps^k(y\succ y'\mid x)]/\beta)]}.
\]
\end{lemma}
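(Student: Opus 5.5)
Since $Z^k(\pi\mid x)$ depends on $\pi$ only through $\pi(\cdot\mid x)$, we compute the functional (Fréchet/Gateaux) derivative at a fixed prompt $x$. We treat $\pi(\cdot\mid x)$ as a vector indexed by responses $z\in\Ys$, so the derivative is taken coordinatewise (directional derivatives along $\delta_z$, or equivalently along $\pi+t(\tilde\pi-\pi)$ within the simplex).

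\textbf{Step 1: rewrite the inner expectation.} Write
\[
\Ps^k_\pi(y'\mid x)=\sum_{y}\pi(y\mid x)\Ps^k(y\succ y'\mid x).
\]
This is linear in $\pi$, with partial derivative $\partial \Ps^k_\pi(y'\mid x)/\partial \pi(z\mid x)=\Ps^k(z\succ y'\mid x)$.

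\textbf{Step 2: differentiate the partition function.}
\[
Z^k(\pi\mid x)=\E_{y'\sim\pi_{\reff}}\bigl[\exp(-\Ps^k_\pi(y'\mid x)/\beta)\bigr].
\]
Differentiating under the expectation and applying the chain rule gives
\[
\frac{\partial Z^k}{\partial \pi(z\mid x)}=-\frac{1}{\beta}\E_{y'\sim\pi_{\reff}}\bigl[\Ps^k(z\succ y'\mid x)\exp(-\Ps^k_\pi(y'\mid x)/\beta)\bigr].
\]
Interchanging derivative and expectation is justified because the integrand is smooth in $\pi$ and uniformly bounded: $\Ps^k\in[0,1]$, so the exponential lies in $[e^{-1/\beta},1]$. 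Dominated convergence therefore applies, or the sum is simply finite if $\Ys$ is finite.

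\textbf{Step 3: apply $-\beta\log$.*} Multiplying the derivative of $\log Z^k$, namely $\nabla Z^k/Z^k$, by $-\beta$ cancels the factor $-1/\beta$. Substituting $\Ps^k_\pi(y'\mid x)=\E_{y\sim\pi}[\Ps^k(y\succ y'\mid x)]$ then yields exactly the stated ratio. Note that $Z^k\ge e^{-1/\beta}>0$, so the logarithm is well defined.

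The only subtle point is the meaning of "gradient" on the simplex. The formula is the unconstrained partial derivative; a constrained version would differ by a constant shift in $z$. Such a shift is harmless for mirror descent and cancels in the difference $g(z)-g(z')$ used in the regression. I expect this to be the main thing to remark on rather than a real obstacle.
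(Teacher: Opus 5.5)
Your proposal is correct and takes the same approach as the paper, whose proof is simply ``by calculation'': use linearity of $\Ps^k_\pi$ in $\pi$, differentiate $Z^k$ under the expectation, then multiply by $-\beta$ and divide by $Z^k$. Your version spells that calculation out and adds the justifications the paper omits, namely the bounds $e^{-1/\beta}\le Z^k\le 1$ and the remark that a constant shift in $z$ cancels in the regression target.
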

\begin{proof}
    By calculation.
\end{proof}
\begin{lemma}
    The gradient of the objective function $-\beta\log Z^{k^\star(x)}(\pi\mid x)$ is 
\[g(z\mid\pi,x)=\frac{\E_{y'\sim\pi_{\reff}}[\Ps^{k^{\star}(x)}(z\succ y'\mid x)\exp(-\E_{y\sim\pi}[\Ps^{k^{\star}(x)}(y\succ y'\mid x)]/\beta)]}{\E_{y'\sim\pi_{\reff}}[\exp(-\E_{y\sim\pi}[\Ps^{k^{\star}(x)}(y\succ y'\mid x)]/\beta)]}.\]
\end{lemma}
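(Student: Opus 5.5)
The plan is to read this lemma as the min-version of \cref{lem:subgradient_max}, specialized to the active coordinate, and then reuse the per-coordinate formula of \cref{lem:gradient_calculate}.

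Fix a prompt $x$ and set $F(\pi)=\min_{k\in[m(x)]} f_k(\pi)$, where $f_k(\pi)=-\beta\log Z^k(\pi\mid x)$. Each $f_k$ is concave in $\pi$ by \cref{lem: concave_Z}. Applying \cref{lem:subgradient_max} to the convex functions $-f_k$, and flipping signs, the superdifferential of $F$ at $\pi$ is the convex hull of $\{\nabla f_k(\pi): k\in\argmin_k f_k(\pi)\}$. By definition $k^\star(x)$ lies in this argmin. Hence $\nabla f_{k^\star(x)}(\pi)$ is a valid supergradient of $F$ at $\pi$, and it is the unique gradient whenever the minimizer is unique.

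For the explicit formula I will use \cref{lem:gradient_calculate} with $k=k^\star(x)$. To make that calculation transparent, I will verify it directly. Write
\[
Z^k(\pi\mid x)=\E_{y'\sim\pi_{\reff}}\bigl[\exp(-\textstyle\sum_y \pi(y\mid x)\Ps^k(y\succ y'\mid x)/\beta)\bigr].
\]
Differentiating with respect to the coordinate $\pi(z\mid x)$ brings down the factor $-\Ps^k(z\succ y'\mid x)/\beta$ inside the expectation. Multiplying by $-\beta/Z^k$ then gives exactly the stated ratio, whose numerator is $\E_{y'}[\Ps^{k}(z\succ y'\mid x)\exp(-\Ps^k_\pi(y'\mid x)/\beta)]$ and whose denominator is $Z^k$.

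The main subtlety is that $k^\star(x)$ depends on $\pi$, so $F$ is not differentiable at ties. I will therefore state the result as a (super)gradient, which is all that the mirror-descent analysis needs. Differentiating with $k^\star$ held fixed is legitimate because, on the open region where the argmin is unique, continuity of each $f_k$ in $\pi$ makes $k^\star$ locally constant. A second technical point is that the gradient is taken in the ambient space $\mathbb{R}^{\Ys}$ rather than tangent to the simplex. This causes no problem: the regression update only uses differences $g(z)-g(z')$, which are invariant to additive constants.

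Finally, the expectation over $x$ decouples across prompts, because each $\pi(\cdot\mid x)$ is a separate block of variables. The gradient of $\E_x[F]$ at block $x$ is therefore this per-prompt gradient, up to the weight of $x$, which I absorb as in the paper's functional-gradient convention.
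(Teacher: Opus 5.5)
Your proposal is correct and follows essentially the same route as the paper, which proves the lemma simply by combining the max-of-convex subdifferential rule (Lemma~\ref{lem:subgradient_max}, applied to the negated concave objectives) with the per-coordinate formula of Lemma~\ref{lem:gradient_calculate} at $k=k^\star(x)$. Your direct differentiation check and your treatment of ties as supergradients fill in details the paper leaves implicit.
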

\begin{proof}
    Combining Lemma~\ref{lem:subgradient_max} and Lemma~\ref{lem:gradient_calculate} yields the conclusion.
\end{proof}
\subsection{Convergence Result}

For notational convenience, we write $\pi_t$ in place of $\pi_{\theta_t}$ and $g_t(x,z)$ in place of $g(z\mid \pi_{\theta_t}, x)$. In the following, we define $f_t(x,z):=\frac{1}{\eta}\log\frac{\pi_{t+1}(z\mid x)}{\pi_t(z\mid x)}$. 

\begin{lemma}\label{lem: smallsquare_pref}
Consider any $t \in [T]$. Define $\Delta(x, z) = f_t(x, z) - g_t(x, z)$. Define $\Delta_{\pi_t}(x) = \mathbb{E}_{z \sim \pi_t(x)} \Delta(x, z)$ and $\Delta_{\pi_{\reff}}(x) = \mathbb{E}_{z \sim \pi_{\reff}(x)} \Delta(x, z)$. Under \cref{ass: regression}, for all $t$, we have the following:
\begin{align}
\mathbb{E}_{x, z \sim \pi_t(x)} \left( f_t(x, z) - g_t(x, z) - \Delta_{\pi_t}(x) \right)^2 &\leq \epsilon, \tag{13} \\
\mathbb{E}_{x, z \sim \pi_{\reff}( x)} \left( f_t(x, z) - g_t(x, z) - \Delta_{\pi_{\reff}}(x) \right)^2 &\leq \epsilon, \tag{14} \\
\mathbb{E}_{x} \left( \Delta_{\pi_t}(x) - \Delta_{\pi_{\reff}}(x) \right)^2 &\leq \epsilon. \tag{15}
\end{align}
\end{lemma}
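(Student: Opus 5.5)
Since $z\sim\pi_t(x)$ and $z'\sim\pi_{\reff}(x)$ are drawn independently given $x$, the plan is to expand the squared loss in \cref{ass: regression} using that independence. With $\Delta(x,z)=f_t(x,z)-g_t(x,z)$, the quantity inside the square in \cref{ass: regression} is exactly $\Delta(x,z)-\Delta(x,z')$. So the assumption reads
\[
\E_{x}\,\E_{z\sim\pi_t(x),\,z'\sim\pi_{\reff}(x)}\big(\Delta(x,z)-\Delta(x,z')\big)^2\le\epsilon .
\]

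Next, for each fixed $x$, decompose
\[
\Delta(x,z)-\Delta(x,z')=\big(\Delta(x,z)-\Delta_{\pi_t}(x)\big)-\big(\Delta(x,z')-\Delta_{\pi_{\reff}}(x)\big)+\big(\Delta_{\pi_t}(x)-\Delta_{\pi_{\reff}}(x)\big).
\]
Expand the square and take the conditional expectation over $(z,z')$ given $x$. The three cross terms all vanish:
\begin{itemize}
\item the first bracket has zero conditional mean under $\pi_t(x)$;
\item the second bracket has zero conditional mean under $\pi_{\reff}(x)$;
\item the third bracket is constant given $x$;
\item by independence of $z$ and $z'$, the product of the first two brackets factors into a product of zero means.
\end{itemize}
What remains is the conditional variance of $\Delta(x,z)$ under $\pi_t(x)$, plus the conditional variance of $\Delta(x,z')$ under $\pi_{\reff}(x)$, plus $(\Delta_{\pi_t}(x)-\Delta_{\pi_{\reff}}(x))^2$.

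Taking $\E_x$ writes the left side of \cref{ass: regression} as a sum of three nonnegative terms: the left sides of (13), (14) and (15), in that order. Each is therefore at most $\epsilon$, which gives all three inequalities at once.

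The only point needing care is integrability: the conditional means $\Delta_{\pi_t}(x)$ and $\Delta_{\pi_{\reff}}(x)$ must be finite so the cross terms are well defined. This follows because the joint second moment in \cref{ass: regression} is finite. By Fubini, for almost every $x$ and $\pi_{\reff}$-almost every $z'$, $\Delta(\cdot,z)$ is square-integrable under $\pi_t(x)$, and symmetrically for $z'$. I expect no real obstacle beyond stating this step, which is the standard variance-decomposition argument used in REBEL \citep{gao2024rebel}.
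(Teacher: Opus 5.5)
Your proposal is correct and follows essentially the same route as the paper. Both arguments rewrite the loss in \cref{ass: regression} as $\E_x\E_{z,z'}\big(\Delta(x,z)-\Delta(x,z')\big)^2$, split $\Delta(x,z)-\Delta(x,z')$ into the two centered pieces plus the mean gap, and observe that all cross terms vanish by centering and by conditional independence of $z$ and $z'$. Inequalities (13)--(15) then follow as three nonnegative summands of a total bounded by $\epsilon$.

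Your integrability remark is extra care that the paper skips; it is automatic when $\Ys$ is finite, which the paper's later $\ln|\Ys|$ bound implicitly assumes. One small slip there: what is square-integrable under $\pi_t(x)$ is $\Delta(x,\cdot)$, not $\Delta(\cdot,z)$.
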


\begin{proof}
From \cref{ass: regression}, we have:
\begin{align*}
&\mathbb{E}_{x, z_1 \sim \pi_t, z_2 \sim \pi_{\reff}} \left( f_t(x, z_1) - \Delta_{\pi_t}(x) - g_t(x, z_1) - \left( f_t(x, z_2) - \Delta_{\pi_{\reff}}(x) - g_t(x, z_2) \right) + \Delta_{\pi_t}(x) - \Delta_{\pi_{\reff}}(x) \right)^2 \\
=& \mathbb{E}_{x, z_1 \sim \pi_t} \left( f_t(x, z_1) - \Delta_{\pi_t}(x) - g_t(x, z_1) \right)^2 
+ \mathbb{E}_{x, z_2 \sim \pi_{\reff}} \left( f_t(x, z_2) - \Delta_{\pi_{\reff}}(x) - g_t(x, z_2) \right)^2 \\
&\quad - 2 \mathbb{E}_{x, z_1 \sim \pi_t, z_2 \sim \pi_{\reff}} \left( f_t(x, z_1) - \Delta_{\pi_t}(x) - g_t(x, z_1) \right) \left( f_t(x, z_2) - \Delta_{\pi_{\reff}}(x) - g_t(x, z_2) \right) \\
&\quad + 2 \mathbb{E}_{x} \left( f_t(x, z_1) - \Delta_{\pi_t}(x) - g_t(x, z_1) \right) \left( \Delta_{\pi_t}(x) - \Delta_{\pi_{\reff}}(x) \right) \\
&\quad - 2 \mathbb{E}_{x, z_2 \sim \pi_{\reff}} \left( \Delta_{\pi_t}(x) - \Delta_{\pi_{\reff}}(x) \right) \left( f_t(x, z_2) - \Delta_{\pi_{\reff}}(x) - g_t(x, z_2) \right) 
+ \mathbb{E}_{x} \left( \Delta_{\pi_t}(x) - \Delta_{\pi_{\reff}}(x) \right)^2 \\
=& \mathbb{E}_{x, z_1 \sim \pi_t} \left( f_t(x, z_1) - \Delta_{\pi_t}(x) - g_t(x, z_1) \right)^2 
+ \mathbb{E}_{x, z_2 \sim \pi_{\reff}} \left( f_t(x, z_2) - \Delta_{\pi_{\reff}}(x) - g_t(x, z_2) \right)^2 \\
&\quad + \mathbb{E}_{x} \left( \Delta_{\pi_t}(x) - \Delta_{\pi_{\reff}}(x) \right)^2 \leq \epsilon.
\end{align*}
\end{proof}

By definition, we have $\Delta(x,z)=\frac{1}{\eta}\log\frac{\pi_{t+1}(z\mid x)}{\pi_t(z\mid x)}-g_t(x,z)$. Taking $\exp$ on both sides, we have
\[
\pi_{t+1}(z\mid x)=\pi_t(z\mid x)\exp(\eta(g_t(x,z)+\Delta(x,z)))=\frac{\pi_t(z\mid x)\exp(\eta(g_t(x,z)+\Delta(x,z)-\Delta_{\pi_{\reff}}(x)))}{\exp(-\eta\Delta_{\pi_{\reff}}(x))}.
\]

Denote $h_t(x,z):=g_t(x,z)+\Delta(x,z)-\Delta_{\pi_{\reff}(x)}$ and the advantage $A_t(x,z)=h_t(x,z)-\E_{z'\sim\pi_t(x)}[h_t(x,z')]$. We can rewrite the update rule as
\[
\pi_{t+1}(z\mid x)\propto\pi_t(z\mid x)\exp(\eta A_t(x,z)).
\]

\begin{lemma}\label{lem: MD}
Assume $\max_{x, y, t} |A_t(x, y)| \leq A \in \mathbb{R}^+$, and $\pi_0( x)$ is uniform over $\mathcal{Y}$. Then with
\[
\eta = \sqrt{\ln(|\mathcal{Y}|)/(A^2 T)},
\]
for the sequence of policies computed by \texttt{PROSPER}, we have:
\[
\forall \pi, x : \sum_{t=0}^{T-1} \mathbb{E}_{y \sim \pi_t(x)} A_t(x, y) \leq 2A \sqrt{\ln(|\mathcal{Y}|) T}.
\]
\end{lemma}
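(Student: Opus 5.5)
The plan is to treat the update $\pi_{t+1}(\cdot\mid x)\propto\pi_t(\cdot\mid x)\exp(\eta A_t(x,\cdot))$ as an independent run of the Hedge (exponential weights) algorithm at each prompt $x$, with experts $\mathcal{Y}$ and reward vectors $A_t(x,\cdot)$, and to use the standard KL-potential regret argument. One remark on the statement first. By construction $\mathbb{E}_{y\sim\pi_t(x)}A_t(x,y)=0$, so with $y\sim\pi_t(x)$ the displayed inequality holds trivially. The quantifier $\forall \pi$ shows that the intended (and later needed) statement has $y\sim\pi(x)$ on the left, i.e. the Hedge regret against an arbitrary comparator $\pi$. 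I will prove that version, $\sum_{t=0}^{T-1}\mathbb{E}_{y\sim\pi(x)}A_t(x,y)\le 2A\sqrt{\ln(|\mathcal{Y}|)T}$; it implies the literal one.

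Fix $x$ and a comparator $\pi$, and use the potential $\Phi_t=\KL(\pi(x)\,\|\,\pi_t(x))$. Writing $Z_t(x)=\mathbb{E}_{y\sim\pi_t(x)}\exp(\eta A_t(x,y))$, the update gives $\log\pi_{t+1}(y\mid x)=\log\pi_t(y\mid x)+\eta A_t(x,y)-\log Z_t(x)$. Taking expectations under $\pi(x)$ then yields the exact one-step identity
\[
\Phi_{t+1}-\Phi_t=-\eta\,\mathbb{E}_{y\sim\pi(x)}A_t(x,y)+\log Z_t(x).
\]

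Next I bound $\log Z_t(x)$. The variable $A_t(x,y)$ with $y\sim\pi_t(x)$ has mean zero and lies in $[-A,A]$, so Hoeffding's lemma gives $\log Z_t(x)\le \eta\,\mathbb{E}_{\pi_t}A_t+\eta^2(2A)^2/8=\eta^2A^2/2$. This is the only step needing care: centering is essential, and it holds because $A_t$ is defined as an advantage relative to $\pi_t$. If one prefers to avoid Hoeffding, the bound $e^u\le 1+u+u^2$ for $|u|\le1$ also works. It requires $\eta A=\sqrt{\ln|\mathcal{Y}|/T}\le 1$, i.e. $T\ge\ln|\mathcal{Y}|$, and gives $\log Z_t\le\eta^2A^2$ with slightly worse constants.

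Summing the identity over $t=0,\dots,T-1$ and telescoping gives
\[
\eta\sum_t\mathbb{E}_{\pi}A_t\le\Phi_0-\Phi_T+T\eta^2A^2/2\le\ln|\mathcal{Y}|+T\eta^2A^2/2 .
\]
Here I drop $\Phi_T\ge 0$ and use $\Phi_0=\KL(\pi\|\mathrm{Unif})\le\ln|\mathcal{Y}|$, which holds because $\pi_0$ is uniform. Dividing by $\eta$ and substituting $\eta=\sqrt{\ln|\mathcal{Y}|/(A^2T)}$ gives $A\sqrt{\ln(|\mathcal{Y}|)T}\,(1+\tfrac12)\le 2A\sqrt{\ln(|\mathcal{Y}|)T}$, uniformly over $x$ and $\pi$. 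This completes the plan.
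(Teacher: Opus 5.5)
Your proof is correct and follows the paper's argument essentially step for step: the KL potential $\KL(\pi(x)\,\|\,\pi_t(x))$, the exact one-step identity from taking logs of the multiplicative update, a bound on $\ln Z_t(x)$ that uses the centering $\mathbb{E}_{y\sim\pi_t(x)}A_t(x,y)=0$, telescoping, and $\KL(\pi\|\pi_0)\le\ln|\mathcal{Y}|$. You are also right that the left-hand side should have $y\sim\pi(x)$; that is what the paper's proof actually concludes and what the main theorem uses. The only difference is that you bound $\ln Z_t(x)$ with Hoeffding's lemma instead of $e^u\le 1+u+u^2$. This improves the constant to $3/2$ and removes the requirement $\eta A\le 1$ (i.e., $T\ge\ln|\mathcal{Y}|$), which the paper's proof uses but the lemma does not state.
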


\begin{proof} Start with
\[
\pi_{t+1}(y \mid x) = \pi_t(y \mid x) \exp(\eta A_t(x, y)) / Z_t(x),
\]
where $Z_t(x)$ is the normalization constant. Taking log on both sides and add $\mathbb{E}_{y \sim \pi_t( x)}$, we have:
\[
- \KL(\pi(x) \| \pi_{t+1}(x)) = - \KL(\pi(x) \| \pi_t(x)) + \eta \mathbb{E}_{y \sim \pi(x)} A_t(x, y) - \mathbb{E}_{y \sim \pi(x)} \ln Z_t(x).
\]

Rearranging terms, we get:
\[
- \KL(\pi(x) \| \pi_t(x)) + \KL(\pi(x) \| \pi_{t+1}(x)) = \mathbb{E}_{y \sim \pi(x)}[- \eta A_t(x, y) + \ln Z_t(x)].
\]

For $\ln Z_t(x)$, using the condition that $\eta \leq 1/A$, we have $\eta A_t(x, y) \leq 1$, which allows us to use the inequality $\exp(x) \leq 1 + x + x^2$ for any $x \leq 1$, which leads to the following inequality:
\begin{align*}
\ln Z_t(x) &= \ln \left( \mathbb{E}_{y \sim \pi_t(x)} \exp(\eta A_t(x, y)) \right) \\
&\leq \ln \left( \sum_y \pi_t(y \mid x) \left( 1 + \eta A_t(x, y) + \eta^2 A_t(x, y)^2 \right) \right) \\
&\leq \ln \left( 1 + 0 + \eta^2 A^2 \right) \leq \eta^2 A^2,
\end{align*}
where the last inequality uses $\ln(1 + x) \leq x$, and we used the fact that $\mathbb{E}_{y \sim \pi_t(x)} A_t(x, y) = 0$ due to the definition of advantage $A_t$.

Thus, we have:
\[
- \KL(\pi(x) \| \pi_t(x)) + \KL(\pi(x) \| \pi_{t+1}(x)) \leq - \eta \mathbb{E}_{y \sim \pi(x)} [A_t(x, y)] + \eta^2 A^2.
\]

Sum over all iterations and do the telescoping sum, we get:
\[
\sum_{t=0}^{T-1} \mathbb{E}_{y \sim \pi(x)} A_t(x, y) \leq \KL(\pi(x) \| \pi_0(x))/\eta + T \eta A^2 \leq \ln(|\mathcal{Y}|)/\eta + T \eta A^2.
\]

With $\eta = \sqrt{\ln(|\mathcal{Y}|)/(A^2 T)}$, we conclude the proof.
\end{proof}
Now we can prove the Theorem~\ref{thm: formal_bound} formally:
\FormalBound*
\begin{proof}
Fix a comparator policy $\pi^\star$. We begin by considering the average performance gap between $\pi^\star$ and $\pi_t$ over $t=0,1,\ldots,T-1$:
\begin{align*}
\frac{1}{T} \sum_{t=0}^{T-1}\InParentheses{V(\pi^\star)-V(\pi_t)}&\le\frac{1}{T} \sum_{t=0}^{T-1}\InParentheses{\InAngles{\nabla V(\pi_t),\pi^\star-\pi_t}}\\
&=\frac{1}{T}\sum_{t=0}^{T-1}\InParentheses{\E_{x,y\sim\pi^\star(x)}[g_t(x,y)]-\E_{x,y\sim\pi_t(x)}[g_t(x,y)]}\\
&=\frac{1}{T} \sum_{t=0}^{T-1} \mathbb{E}_{x,y \sim \pi^\star(x)} \left( A^{\pi_t}(x, y) \right),
\end{align*}
where the first inequality holds because of concavity and we define the real advantage $A^{\pi_t}(x, y) := g_t(x, y) - \mathbb{E}_{z \sim \pi_t(x)} g_t(x, z)$. Continuing, we have:
\[
\frac{1}{T} \sum_{t=0}^{T-1} \mathbb{E}_{x,y \sim \pi^\star(x)} \left( A^{\pi_t}(x, y) \right)
= \frac{1}{T} \sum_{t=0}^{T-1} \mathbb{E}_{x,y \sim \pi^\star(x)} \left( A_t(x, y) \right)
+ \frac{1}{T} \sum_{t=0}^{T-1} \mathbb{E}_{x,y \sim \pi^\star(x)} \left( A^{\pi_t}(x, y) - A_t(x, y) \right)
\]

\[
\leq 2A \sqrt{\frac{\ln(|\mathcal{Y}|)}{T}} + \frac{1}{T} \sum_{t=0}^{T-1} \sqrt{\mathbb{E}_{x,y \sim \pi^\star(x)} \left( A^{\pi_t}(x, y) - A_t(x, y) \right)^2},
\]
where the last inequality uses \cref{lem: MD}. We now just need to bound $\mathbb{E}_{x,y \sim \pi^\star(x)} \left( A^{\pi_t}(x, y) - A_t(x, y) \right)^2$.

\[
\mathbb{E}_{x,y \sim \pi^\star(x)} \left( A^{\pi_t}(x, y) - A_t(x, y) \right)^2 
= \mathbb{E}_{x} \mathbb{E}_{y \sim \pi_{\reff}(x)} \left( \frac{\pi^\star(y \mid x)}{\pi_{\reff}(y \mid x)} (A^{\pi_t}(x, y) - A_t(x, y))^2 \right)
\]

\[
\leq C_{\pi_{\reff}\rightarrow\pi^\star} \cdot \mathbb{E}_{x, y \sim \pi_{\reff}(x)} \left( A^{\pi_t}(x, y) - A_t(x, y) \right)^2
\]

where the last inequality uses the definition of \textit{concentrability coefficient} $C_{\pi_{\reff}\rightarrow\pi^\star}$, which is \[C_{\pi_{\reff}\rightarrow\pi}=\max_{x,y}\frac{\pi(y\mid x)}{\pi_{\reff}(y\mid x)}.\]

We now bound $\mathbb{E}_{x,y \sim \pi_{\reff}( x)} \left( A^{\pi_t}(x, y) - A_t(x, y) \right)^2$. Recall the definition of $A_t$ from \cref{lem: MD}:

\begin{align*}
\mathbb{E}_{x,y \sim \pi_{\reff}( x)} \left( A^{\pi_t}(x, y) - A_t(x, y) \right)^2
&= \mathbb{E}_{x, y, y' \sim \pi_t(x)} \left( \hat{g}_t(x, y) - \mathbb{E}_{y' \sim \pi_t(x)} \hat{g}_t(x, y') - h_t(x, y) + \mathbb{E}_{y' \sim \pi_t(x)} h_t(x, y') \right)^2 \\
&\leq 2 \mathbb{E}_{x, y \sim \pi_{\reff}(x)} \left( \hat{g}_t(x, y) - h_t(x, y) \right)^2
+ 2 \mathbb{E}_{x, y' \sim \pi_t(x)} \left( \mathbb{E}_{y \sim \pi_t(x)} [\hat{g}_t(x, y) - h_t(x, y)] \right)^2
\end{align*}

Recall that $h_t(x, y) = g(x, y) + \Delta(x, y) - \Delta_{\pi_{\reff}}(x)$, and from \cref{lem: smallsquare_pref}, we can see that:
\[
\mathbb{E}_{x, y \sim \pi_{\reff}(x)} \left( g_t(x, y) - h_t(x, y) \right)^2 
= \mathbb{E}_{x, y \sim \pi_{\reff}(x)} \left( \Delta(x, y) - \Delta_{\pi_{\reff}}(x) \right)^2 \leq \epsilon.
\]

For 
\[
\mathbb{E}_{x} \mathbb{E}_{y' \sim \pi_t( x)} \left( g_t(x, y') - h_t(x, y') \right)^2,
\]
we have:
\begin{align*}
\mathbb{E}_{x} \mathbb{E}_{y' \sim \pi_t( x)} \left( \hat{g}_t(x, y') - h_t(x, y') \right)^2 
&= \mathbb{E}_{x} \mathbb{E}_{y' \sim \pi_t( x)} \left( \Delta_{\pi_t}(x, y') - \Delta_{\pi_{\reff}}(x) \right)^2 \\
&= \mathbb{E}_{x} \mathbb{E}_{y' \sim \pi_t( x)} \left( \Delta(x, y') - \Delta_{\pi_t}(x) + \Delta_{\pi_t}(x) - \Delta_{\pi_{\reff}}(x) \right)^2 \\
&\leq 2 \mathbb{E}_{x} \mathbb{E}_{y' \sim \pi_t( x)} \left( \Delta(x, y') - \Delta_{\pi_t}(x) \right)^2 
+ 2 \mathbb{E}_{x} \left( \Delta_{\pi_t}(x) - \Delta_{\pi_{\reff}}(x) \right)^2 \\
&\leq 4\epsilon,
\end{align*}
where the last inequality uses \cref{lem: smallsquare_pref} again. This step relies on the fact that one of the samples is always on-policy, i.e., from $\pi_t$.

Combine things together, we can conclude that:
\[
\mathbb{E}_{x} \mathbb{E}_{y \sim \pi^\star( x)} \left( A^{\pi_t}(x, y) - A_t(x, y) \right)^2 \leq C_{\pi_{\reff}\rightarrow\pi^\star}(10\epsilon).
\]

We can conclude:
\begin{align*}
\frac{1}{T} \sum_{t=0}^{T-1}\InParentheses{V(\pi^\star)-V(\pi_t)}
&\leq 2A \sqrt{ \frac{\ln |\mathcal{Y}|}{T} } + \frac{1}{T} \sum_{t} \sqrt{C_{\pi_{\reff}\rightarrow\pi^\star} 10\epsilon} \\
&= 2A \sqrt{ \frac{\ln |\mathcal{Y}|}{T} } + \sqrt{C_{\pi_{\reff}\rightarrow\pi^\star} 10\epsilon}.
\end{align*}
\end{proof}

\newpage
\section{Implementation Details}\label{sec:Implementation_details}

\subsection{Implementation Details and Hyperparameters}
\label{app:hyperparameters}

Our algorithm follows an off-policy training paradigm, where each epoch is partitioned into three distinct phases: response generation, preference scoring, and policy optimization. We execute two training epochs per model. The following sections detail the hyperparameter configurations and computational requirements for each phase.

\noindent \textbf{Response Generation.}
In each epoch, we sample $K=8$ responses per prompt from the current policy. The sampling parameters and the computational resources utilized for this stage are summarized in Table~\ref{tab:response_generation_hparams}.

\begin{table}[H]
\centering
\small
\caption{Hyperparameters and compute configuration for response generation.}
\label{tab:response_generation_hparams}
\begin{tabular}{ll}
\toprule
\textbf{Parameter} & \textbf{Value} \\
\midrule
Temperature & 0.8 \\
Top-$p$ & 0.9 \\
Max new tokens & 2048 \\
\midrule
Hardware & $1 \times (16 \times \text{NVIDIA H100 80GB})$ \\
Total compute time & $\approx 2$ hours per model per epoch \\
\bottomrule
\end{tabular}
\end{table}

\noindent \textbf{Score Generation.}
We utilize an LLM-as-a-judge framework to evaluate the generated response pairs. Each pair is integrated into a structured evaluation template, and any resulting sequences exceeding a length of 4096 tokens are excluded to maintain consistency. To ensure the robustness of the preference signals, we sample 10 independent judgments per (prompt, response pair). To mitigate inherent positional bias, we implement a symmetric evaluation protocol: the presentation order of the responses is permuted for half of the samples (5 out of 10), and the corresponding scores are reversed before computing the final average. Detailed scoring parameters are summarized in Table~\ref{tab:score_generation_hparams}.

\begin{table}[H]
\centering
\small
\caption{Hyperparameters and compute configuration for score generation.}
\label{tab:score_generation_hparams}
\begin{tabular}{ll}
\toprule
\textbf{Parameter} & \textbf{Value} \\
\midrule
Temperature & 0.8 \\
Top-$p$ & 0.9 \\
Top-$k$ & 20 \\
Max new tokens & 256 \\
Selection pairs & 4 \\
Base pairs & 2 \\
Current pairs & 2 \\
\midrule
Hardware & $1 \times (16 \times \text{NVIDIA H100 80GB})$ \\
Total compute time & $\approx 50$ hours per model per epoch \\
\bottomrule
\end{tabular}
\end{table}

\noindent \textbf{Training and Optimization.}
Following the derivation of game-theoretic training signals from the preference scores, we filter the dataset to retain the most informative signals. Specifically, we expand the dataset by constructing all possible pairwise comparisons within each prompt and then apply a global filter based on the score gap. We use a filtration ratio of $0.15$ for the first epoch and $0.17$ for the second, selected based on performance on the \textsc{Arena-Hard} benchmark. The policy is then optimized using the hyperparameters listed in Table~\ref{tab:training_hparams}.

\begin{table}[H]
\centering
\small
\caption{Hyperparameters and compute configuration for policy training.}
\label{tab:training_hparams}
\begin{tabular}{ll}
\toprule
\textbf{Parameter} & \textbf{Value} \\
\midrule
Batch size & 128 \\
Max input length & 1024 \\
Max sequence length & 2048 \\
Optimizer & AdamW \\
AdamW $\epsilon$ & $1 \times 10^{-8}$ \\
Learning rate & $3 \times 10^{-7}$ \\
Weight decay & $1 \times 10^{-6}$ \\
Warmup ratio & 0.1 \\
Max gradient norm & 1.0 \\
Seed & 555134 \\
\midrule
Hardware & $1 \times (8 \times \text{NVIDIA H100 80GB})$ \\
Total compute time & $\approx 3$ hours per model per epoch \\
\bottomrule
\end{tabular}
\end{table}

\subsection{Prompts for LLM Judge}
\begin{figure}[!t]
    \centering
    \includegraphics[width=\linewidth,height=0.9\textheight,keepaspectratio]{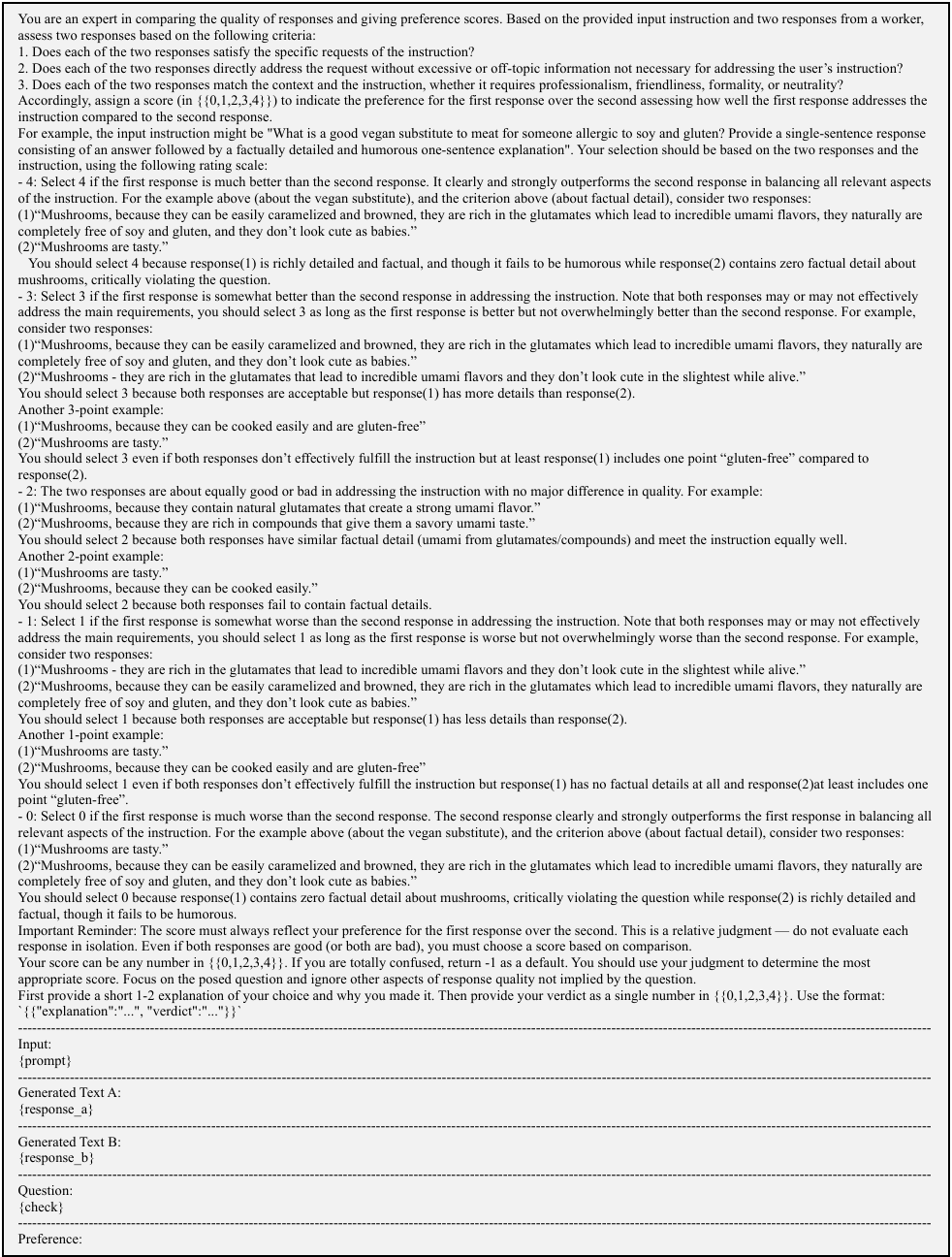}
    \caption{Prompt for generating preference score according to specific criteria.}
    \label{fig:multi_prompt}
\end{figure}
\begin{figure}[!t]
    \centering
    \includegraphics[width=\linewidth,height=0.9\textheight,keepaspectratio]{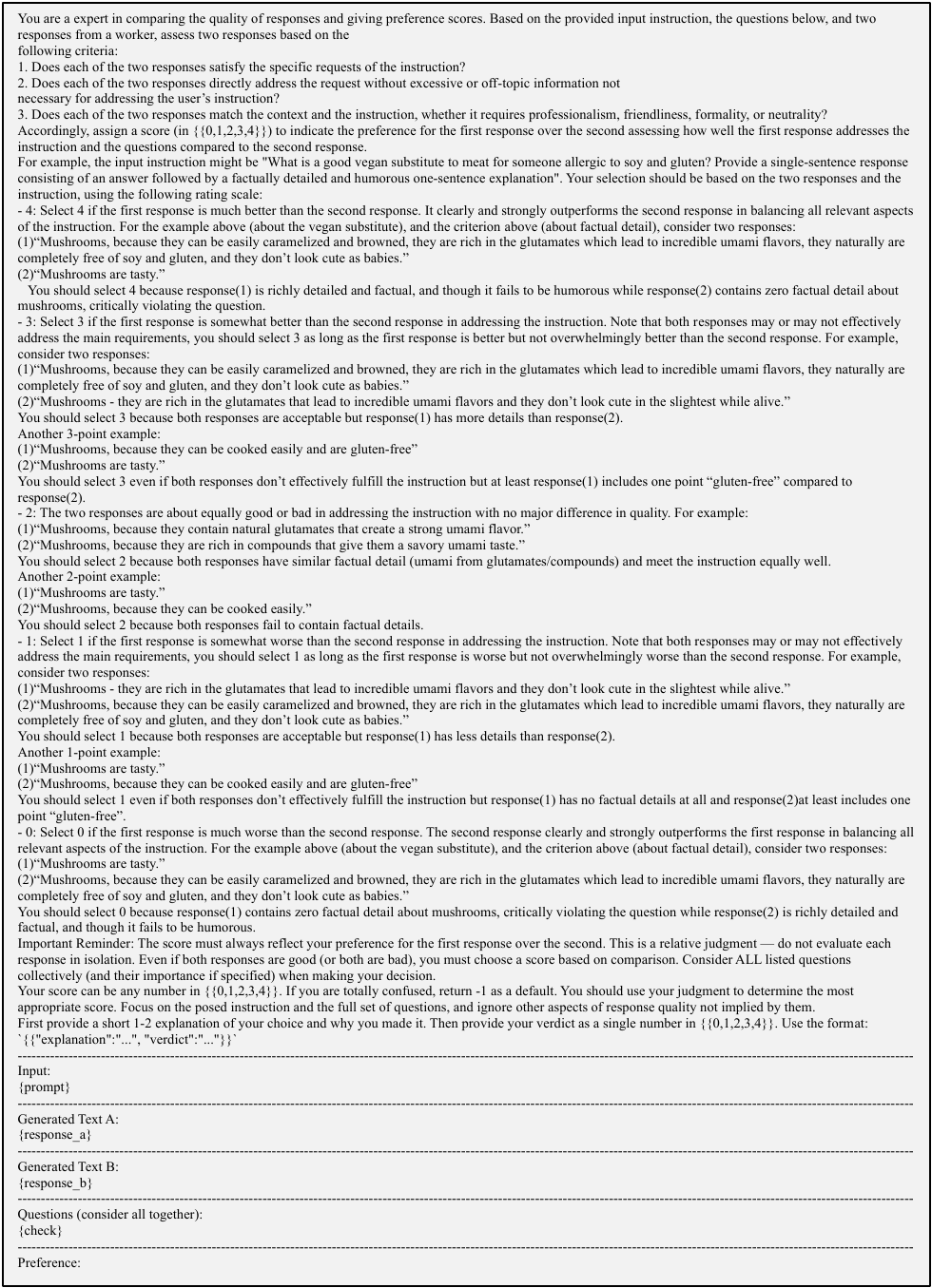}
    \caption{Prompt for generating preference score according to multiple criterion.}
    \label{fig:fullcheck_prompt}
\end{figure}
\par We provide the prompts used for requirement checking in~\cref{fig:multi_prompt} and~\cref{fig:fullcheck_prompt}. In Figure~\ref{fig:multi_prompt}, each check for a given prompt is provided separately, whereas in Figure~\ref{fig:fullcheck_prompt}, all checks for the prompt are provided together. The templates are adapted from those in \citet{viswanathan2025checklists}, modified for pairwise comparison and augmented with explanation prompting to improve scoring quality.

\end{document}